\documentclass[12pt]{article}

\usepackage[utf8]{inputenc}
\usepackage[T1]{fontenc}
\usepackage{lmodern}
\usepackage{amsmath,amssymb,amsthm}
\usepackage{geometry}
\usepackage{setspace}
\usepackage{hyperref}
\usepackage{enumitem}

\title{A Long-Run Persistence Theory for AI Systems under the Redundancy-Adjusted Artificial Age Score (AAS)\\[1.5cm]}
\author{
Seyma Yaman Kayadibi\\
Victoria University\\
\texttt{seyma.yamankayadibi@live.vu.edu.au}
}
\date{}

\newtheorem{theorem}{Theorem}[section]
\newtheorem{lemma}[theorem]{Lemma}
\newtheorem{corollary}[theorem]{Corollary}
\newtheorem{definition}[theorem]{Definition}

\begin{document}

\maketitle

\begin{abstract}
Artificial intelligence systems are increasingly expected to operate over repeated cycles of interaction, adaptation, and update rather than through isolated one-shot outputs. This shift raises a fundamental theoretical question: can an AI system persist indefinitely without incurring unbounded structural aging? This paper develops a long-run persistence framework for AI systems based on the redundancy-adjusted Artificial Age Score (AAS). The proposed model extends AAS from a static evaluative measure into a cycle-level functional that generates an age sequence across repeated operation. For each cycle, structural age is defined through a weighted, redundancy-aware logarithmic penalty over component consistency levels. Within this formulation, cycle-level age is shown to be well defined and uniformly bounded, which excludes explosive pointwise aging. Building on this foundation, the paper defines a hierarchy of asymptotic regimes, including burdened persistence, zero-burden persistence, oscillatory persistence, and cumulative terminal burden. It further establishes comparative ordering, sensitivity bounds, convergence under componentwise stabilization, persistence under finite total variation, geometric stabilization under damped inter-cycle perturbations, and a zero-burden characterization under nondegenerate redundancy conditions. The central result is that, within the proposed framework, indefinite cyclic continuation does not require unbounded aging: an AI system may pass through infinitely many cycles while its structural age remains bounded, and under stronger regularity conditions its marginal aging vanishes and, in the strongest regime, its cycle-level burden converges to zero. The framework therefore provides a formal basis for analyzing long-run artificial persistence as a problem of bounded structural burden rather than inevitable cumulative deterioration.
\end{abstract}

\noindent\textbf{Keywords:} Artificial Age Score; long-run persistence; structural burden; redundancy adjustment; asymptotic regimes; bounded aging; repeated artificial operation; cycle-level analysis

\section{Introduction}

\subsection{Repeated artificial operation as a theoretical problem}

Artificial intelligence is increasingly evaluated not only by whether it can produce a satisfactory output on a single occasion, but also by how it behaves under repeated conditions of use. This broader orientation is consistent with the operational turn introduced by Turing, who redirected the question of machine intelligence away from metaphysical essence and toward externally assessable performance (Turing, 1950). A similar orientation appears in the Dartmouth proposal, where learning and intelligence were treated as phenomena that might be specified with sufficient precision for machine simulation (McCarthy et al., 1955).

This early operational perspective matters because it changes the object of evaluation. Once intelligence is approached through observable operation, it becomes legitimate to ask not only whether a system succeeds at a single moment, but also whether its performance remains acceptable under repeated conditions of use. In that sense, repeated operation is not merely an extension of one-shot evaluation. It is a theoretically distinct problem.

This shift was reinforced by additional foundational work linking logic, learning, and adaptive organization. McCulloch and Pitts showed that nervous activity could be represented in logical form (McCulloch \& Pitts, 1943). Rosenblatt introduced a probabilistic and adaptive account of learning (Rosenblatt, 1958). Newell and Simon described computer science as an empirical inquiry into symbols and search (Newell \& Simon, 1976). Ashby emphasized adaptive organization and regulation rather than mere passive endurance (Ashby, 1960). Taken together, these traditions support a view of AI in which the relevant unit of analysis is not isolated success alone, but organized behavior unfolding across time.

Once repeated operation becomes the relevant object of study, a deeper theoretical problem emerges. A system may succeed on isolated tasks while gradually accumulating instability, unresolved load, or dependence on compensatory structure. It may also remain sustainable over long horizons not because it is free of burden, but because that burden stays bounded, damped, or structurally absorbed. The problem addressed in this paper is therefore not one-shot success, but long-run persistence: can an artificial system continue operating across indefinitely many cycles without incurring uncontrolled structural age?

\subsection{Why one-shot performance is insufficient}

One-shot performance is insufficient for this problem for at least three reasons. First, endpoint success suppresses temporal structure. A correct output at one cycle does not show whether the system is stabilizing, drifting, oscillating, or relying on increasingly fragile forms of compensation. Second, binary success-failure language suppresses degree. It does not distinguish between mild and severe deterioration, or between temporary disturbance and accumulating structural cost. Third, simple additive summaries may overcount effective load when several apparent weaknesses are partly overlapping manifestations of the same underlying instability.

These limitations become especially serious when the aim is to analyze repeated artificial operation rather than isolated task completion. A metric designed only for single outcomes cannot by itself reveal whether the burden associated with continued operation is diminishing, stabilizing, or compounding. For that reason, the present study does not begin from raw accuracy, event counts, or failure frequencies alone. Instead, it introduces a cycle-level function designed to track structural burden across repeated operations.

The point of introducing such a function is not merely to score how well the system performs at one moment. It is to formalize how burden evolves from one cycle to the next across an unbounded operational horizon. A suitable formal object for this purpose must satisfy at least three requirements. It must remain interpretable at the component level. It must remain mathematically controlled under repetition. And it must distinguish genuinely effective burden from burden that is only apparently multiple because of structural overlap.

The redundancy-adjusted Artificial Age Score is suitable precisely because it satisfies these requirements. It is componentwise decomposable. It is weighted rather than indiscriminate. It is logarithmically sensitive to deterioration. And it is explicitly adjusted for redundancy. In other words, it can represent not only how much weakness appears at a given cycle, but also how much of that weakness should count as effective structural load. This is the first main reason AAS belongs within a long-run persistence theory rather than being treated merely as another static metric.

\subsection{Information, reliability, and the rationale for AAS}

The mathematical shape of the framework is not arbitrary. One of its foundations lies in information theory. Shannon made logarithmic structure central to the quantitative treatment of uncertainty, transmission, and degradation (Shannon, 1948). Shannon is relevant here not because the Artificial Age Score is identical to entropy, but because his work provides a mathematical precedent for using logarithmic form in settings where sensitivity is nonlinear and where deterioration cannot be adequately represented by simple linear change alone.

That is the role played by the logarithmic penalty kernel in the present framework. A linear penalty would treat equal absolute losses as equivalent across all regions of the state space. The present model does not. It assigns greater structural significance to deterioration that occurs in already vulnerable regions. This makes the burden functional more appropriate for long-run persistence analysis, because repeated operation is often threatened less by isolated mild fluctuations than by losses concentrated in already weak parts of the system.

A second foundation lies in cybernetics and reliability theory. Wiener broadened the analysis of systems from communication alone to communication-and-control, thereby making feedback, adaptation, and stability central theoretical concerns (Wiener, 1948/1961). Von Neumann addressed the problem of how reliable systems can be constructed from unreliable components (von Neumann, 1956). This question is directly relevant to repeated AI operation, because the structural condition of an artificial system depends not only on local weakness, but also on how weakness is distributed, compensated, and prevented from compounding.

This is where the redundancy adjustment becomes necessary. A long-run persistence theory cannot rest on a quantity that is blind to overlap. If several observed weaknesses are partly manifestations of the same latent instability, then a purely additive measure can exaggerate effective burden. The redundancy-adjusted AAS is introduced precisely to avoid that problem. It treats age as a weighted functional of structural load rather than as a merely metaphorical label. It also discounts the portion of apparent burden that is redundant rather than genuinely independent. For a theory of repeated operation, this matters because the relevant question is not simply how many weaknesses appear, but how much effective structural load they impose.

This is the decisive reason AAS is used in the present paper. A theory of indefinite repetition cannot be built on a quantity that is unbounded by construction, insensitive to overlap, or meaningful only at a single instant. By contrast, the AAS-based cycle-level functional supplies a bounded quantity, preserves component structure through weighting, discounts apparent double counting through redundancy correction, and can be analyzed as a sequence over repeated cycles. It therefore provides the formal object needed to ask whether repeated operation remains merely ongoing, becomes cumulatively burdened, or asymptotically attenuates its own structural age.

\subsection{Disciplined repetition, infinity, and formal constraint}

The question of indefinite repetition requires conceptual discipline. The present study does not claim that artificial systems literally instantiate metaphysical infinity. Instead, it treats an unbounded cycle horizon as a legitimate object of mathematical analysis only when repeated operation is governed by explicit structural constraints. For that reason, the framework is concerned not with endlessness as such, but with bounded cycle-level age, convergence, cumulative burden, and asymptotic regime.

This perspective has a clear mathematical background. Bolzano helped establish that infinite or indefinitely extensible structures can be treated as genuine objects of analysis rather than as merely rhetorical notions (Bolzano, 1851). Cantor gave the treatment of infinite order a stronger formal footing (Cantor, 1874). Dedekind reinforced that structural orientation through his treatment of number systems and infinite sets (Dedekind, 1888). These developments matter here because the present study is concerned not with metaphysical endlessness, but with formally governed non-terminal repetition.

At the same time, the history of foundations also revealed the dangers of unrestricted totalization. Russell’s paradox showed that unconstrained set formation can generate contradiction (Russell, 1903). Zermelo’s axiomatization made clear why formal restriction is necessary (Zermelo, 1908). Hilbert later emphasized that infinity becomes mathematically tractable only under disciplined symbolic control (Hilbert, 1926). The relevant lesson for the present study is straightforward: indefinite repetition becomes mathematically meaningful only when the repeated object is explicitly defined, bounded where necessary, and stable enough to support asymptotic reasoning.

Frege is relevant in a precise and limited sense. In \textit{Begriffsschrift}, he made formal definition and exact logical articulation central to the treatment of mathematical reasoning (Frege, 1879/1967). In \textit{The Foundations of Arithmetic}, he argued that number should not be understood as a vague empirical accumulation, but as something requiring exact conceptual determination (Frege, 1884/1980). Contemporary scholarship likewise emphasizes that Frege’s project turned logical structure into a rigorous instrument for the reconstruction of arithmetic (Cook, 2023). That orientation matters here because the present study also transforms repeated cycles into a formally structured sequence rather than leaving them at the level of informal temporal description.

Kant is relevant for a different but complementary reason. In the \textit{Critique of Pure Reason}, mathematical cognition is tied to determinate construction rather than mere empirical succession (Kant, 1781/1787). Contemporary scholarship on Kant’s philosophy of mathematics has emphasized exactly this connection between construction and mathematical determinacy (Shabel, 2013). That point matters here because repeated cycles do not become mathematically significant simply by continuing. They become analyzable only when the governing burden functional is explicitly structured and rule bound.

Aristotle may also be used in a narrow and controlled way. His distinction between potentiality and actuality helps clarify that the mere possibility of indefinite continuation is not yet the same as a determinate operational state (Aristotle, trans.\ 1998). Contemporary reference treatments of Aristotle’s metaphysics continue to treat this distinction as central to his account of realized form and determinate being (Cohen, 2000). In that limited sense, Aristotle helps distinguish mere repeatability from a specific realized mode of long-run operation, whether as bounded persistence, burdened persistence, oscillatory persistence, or zero-burden persistence.

Kuhn provides a further contextual support, but only in a qualified sense. In \textit{The Structure of Scientific Revolutions}, he showed that scientific development involves shifts in what counts as a legitimate problem, method, and standard of assessment (Kuhn, 1962/2012). Contemporary discussions of Kuhn likewise emphasize changes in conceptual and methodological frames rather than the mere addition of isolated techniques (Bird, 2004). In that narrower sense, the move from one-shot benchmarking to longitudinal structural evaluation may be understood as a shift in evaluative perspective rather than as the simple introduction of another metric.

A final caution comes from Gödel. His incompleteness results showed that sufficiently expressive formal systems face intrinsic limits of derivability (Gödel, 1931). Gödel is relevant here not because the present framework is thereby invalidated, but because it reminds us that no formal model should be mistaken for an exhaustive theory of intelligence, semantics, or cognition. The aim of this study is therefore narrower and more rigorous: to formalize one specific aspect of repeated artificial operation, namely the long-run behavior of structural burden.

\subsection{Research gap, position of the study, and contributions}

Despite major traditions in AI, information theory, cybernetics, reliability theory, and formal philosophy, a gap remains between static evaluation and a rigorous account of long-run structural persistence. Existing traditions explain why performance, uncertainty, feedback, compensation, and formal discipline matter. What they do not by themselves provide is a cycle-level mathematical framework capable of distinguishing bounded age, burdened persistence, zero-burden persistence, oscillatory persistence, and cumulative terminal burden under repeated artificial operation.

The present study addresses that gap by extending the redundancy-adjusted Artificial Age Score into a long-run persistence framework. Its aim is not to redefine intelligence in general, nor to anthropomorphize AI through a loose metaphor of aging. Rather, it introduces a cycle-level formalism in which structural age is represented as a bounded, weighted, and redundancy-aware burden process. This makes it possible to ask a sharper question than whether a system merely continues to operate. Under repeated use, does its burden stabilize at a positive level? Does it remain bounded while fluctuating? Does it accumulate without bound in the aggregate? Or does it attenuate toward zero?

The study makes five main contributions. First, it reformulates the redundancy-adjusted Artificial Age Score as a cycle-level functional that generates an age sequence across repeated operation. Second, it defines asymptotic operating regimes, including burdened persistence, zero-burden persistence, oscillatory persistence, and cumulative terminal burden. Third, it establishes the analytical properties required for a persistence theory, including well-definedness, boundedness, comparative ordering, perturbation sensitivity, and convergence behavior. Fourth, it shows that, within the proposed framework, indefinite cyclic continuation does not require unbounded structural aging. Fifth, it provides a formal language for distinguishing systems that merely continue operating from systems whose structural burden is asymptotically attenuated.

The main research question is therefore: Can an AI system persist indefinitely without unbounded structural aging?

The more specific theoretical sub-question is: Under what mathematical conditions can repeated artificial operation remain compatible with bounded cycle-level age, vanishing marginal aging, and, in the strongest regime, asymptotically vanishing structural burden?
\section{Formal Framework}

\subsection{Cycle-level age as a formal object}

Following the redundancy-adjusted Artificial Age Score framework introduced by Kayadibi (2026), repeated artificial operation is modeled over discrete cycles indexed by \(n \in \mathbb{N}\). At each cycle, the system is described through a finite collection of components \(i = 1, \dots, m\), where \(m < \infty\). These components may represent modules, subsystems, thematic dimensions, or any other analytically distinguished units of artificial operation. The finiteness of \(m\) ensures that the cycle-level age functional is defined through a finite weighted aggregation rather than through an infinite series.

The cycle-level age of the system at cycle \(n\) is defined by
\begin{equation}
A_n = \sum_{i=1}^{m} w_i (1 - R_{n,i}) \phi(x_{n,i}).
\end{equation}

In this expression, \(x_{n,i}\) denotes the consistency or robustness level of component \(i\) at cycle \(n\), \(R_{n,i}\) denotes the redundancy correction term, \(w_i\) denotes the component weight, and \(A_n\) denotes the realized structural age of the system at cycle \(n\).

This definition is the formal core of the model. It states that the age of the system at a given cycle is not treated as a metaphorical label, but as a weighted structural burden obtained by aggregating component-level penalties. Each component contributes according to three factors: its weight in the system, its non-redundant burden, and the penalty associated with its current consistency level. In this way, age is represented as a cycle-specific mathematical quantity rather than as a vague descriptive notion.

The sequence
\[
\{A_n\}_{n=1}^{\infty}
\]
therefore becomes the central object of long-run analysis. The theoretical problem of persistence is then reformulated as a question about the asymptotic behavior of this age sequence under repeated operation.

\subsection{The logarithmic penalty kernel}

Following Kayadibi (2026), let \(\varepsilon > 0\) be a fixed regularization parameter. The penalty kernel is defined by
\begin{equation}
\phi(x) = -\log_2\!\left(\frac{x+\varepsilon}{1+\varepsilon}\right)
= \log_2\!\left(\frac{1+\varepsilon}{x+\varepsilon}\right),
\qquad x \in (0,1].
\end{equation}

The parameter \(\varepsilon\) serves as a regularization term. Its role is to ensure that the penalty remains finite and well defined throughout the domain \((0,1]\), especially near very small consistency values. Without this term, the logarithmic expression would become singular as \(x \to 0\). By introducing \(\varepsilon\), the framework preserves logarithmic nonlinearity while preventing divergence of the kernel near the lower edge of the admissible domain.

The interpretation of \(\phi\) is straightforward but important. A higher consistency level \(x\) produces a smaller penalty, whereas a lower consistency level produces a larger penalty. Because the kernel is logarithmic, deterioration is not penalized linearly. Instead, the same absolute loss in consistency produces a stronger penalty when the system is already in a weak state than when it remains in a strong one. This makes the model more sensitive to degradation in vulnerable regions.

The logarithmic form is therefore not a decorative choice. It provides the mathematical mechanism through which nonlinear structural sensitivity is built into the age functional.
\subsection{Domain restrictions, weights, severity channel, and redundancy correction}

Consistent with the redundancy-adjusted AAS framework in Kayadibi (2026), the following domain restrictions are imposed:
\begin{equation}
x_{n,i}\in(0,1], \qquad R_{n,i}\in[0,1], \qquad w_i\ge 0, \qquad \sum_{i=1}^{m} w_i=1.
\end{equation}

These assumptions define the admissible state space of the model.

First, the condition \(x_{n,i}\in(0,1]\) means that each component consistency level is strictly positive and at most one. The upper value \(x_{n,i}=1\) corresponds to perfect consistency, while smaller values indicate increasing structural weakness. The exclusion of zero ensures that the logarithmic penalty remains well defined on the stated domain.

Second, the condition \(R_{n,i}\in[0,1]\) means that redundancy is represented as a bounded correction factor. If \(R_{n,i}=0\), then no redundancy correction is applied and the full penalty of the component is counted. If \(R_{n,i}\) is closer to one, then a larger share of the apparent burden is treated as structurally redundant. Thus, the factor \(1-R_{n,i}\) represents the effective non-redundant share of the component’s burden.

Third, the weights \(w_i\) satisfy \(w_i\ge 0\) and \(\sum_{i=1}^{m} w_i=1\). This normalization ensures that the age functional is interpreted on a common weighted scale rather than as an unnormalized sum whose magnitude depends directly on the number of components. A component with larger weight contributes more strongly to cycle-level age, reflecting greater structural importance.

A component \(i\) is called active if \(w_i>0\). This distinction matters because only active components influence the value of \(A_n\). Components with zero weight may remain part of the formal indexing set, but they do not affect the realized age of the system.

In the present framework, severity and redundancy are analytically distinct. The term \(R_{n,i}\) does not measure the magnitude of weakness itself. Rather, it measures the extent to which the burden attributed to component \(i\) overlaps with that of other burden-bearing structures, so that the effective contribution of the component is reduced by the factor \(1-R_{n,i}\). Here \(V_{n,i}\) denotes an underlying component-level violation magnitude, whereas \(\Delta_n\), introduced later among the inter-cycle quantities, denotes the positive cycle-to-cycle increment in system-level age.

By contrast, the severity channel of the model is carried by \(x_{n,i}\) and \(\phi(x_{n,i})\). More generally, whenever the consistency score \(x_{n,i}\) is derived from an underlying violation magnitude \(V_{n,i}\), the intended logic is
\begin{equation}
V_{n,i}\uparrow \;\Rightarrow\; x_{n,i}\downarrow \;\Rightarrow\; \phi(x_{n,i})\uparrow.
\end{equation}

That is, larger violations correspond to lower consistency, lower consistency yields a larger penalty, and a larger penalty increases the raw burden contribution before redundancy adjustment is applied. Redundancy then enters separately through
\begin{equation}
R_{n,i}\uparrow \;\Rightarrow\; (1-R_{n,i})\downarrow,
\end{equation}
so that greater structural overlap reduces the effective independent contribution of that penalty to cycle-level age.

The model therefore has a two-channel structure. The term \(\phi(x_{n,i})\) captures the severity-sensitive burden attached to the component’s current consistency level, whereas the factor \((1-R_{n,i})\) discounts the portion of that burden that is structurally redundant rather than genuinely independent. Accordingly, the contribution of component \(i\) to cycle-level age is not \(\phi(x_{n,i})\) alone, but
\begin{equation}
w_i(1-R_{n,i})\phi(x_{n,i}).
\end{equation}

Taken together, these restrictions and distinctions ensure that the cycle-level age functional is mathematically well defined and structurally interpretable. The framework does not merely count weaknesses; it weights them, discounts structurally overlapping burden, and penalizes them nonlinearly through the kernel \(\phi\).
\subsection{Inter-cycle quantities}

To study how structural age evolves across repeated operation, three inter-cycle quantities are introduced:
\begin{equation}
\delta_n := A_{n+1}-A_n,
\end{equation}
\begin{equation}
D_n := |A_{n+1}-A_n|,
\end{equation}
\begin{equation}
\Delta_n := \max\{0, A_{n+1}-A_n\}.
\end{equation}

These quantities separate three analytically distinct aspects of cycle-to-cycle change.

The first quantity, \(\delta_n\), is the signed increment. It records both the direction and the magnitude of the change between two consecutive cycles. If \(\delta_n>0\), age has increased. If \(\delta_n<0\), age has decreased. If \(\delta_n=0\), age is unchanged from one cycle to the next.

The second quantity, \(D_n\), is the absolute inter-cycle variation. Unlike \(\delta_n\), it ignores direction and measures only the size of the fluctuation. It therefore captures instability in the age process regardless of whether that instability takes the form of increase or decrease.

The third quantity, \(\Delta_n\), isolates positive age accumulation. It records only upward movement and assigns negative movement a value of zero. This is useful because not every fluctuation corresponds to new structural burden. A system may oscillate considerably without exhibiting persistent accumulation of age. The quantity \(\Delta_n\) is introduced precisely to distinguish general fluctuation from one-sided worsening.

This separation is essential for long-run analysis. A theory of persistence should not confuse variability, one-sided accumulation, and stabilization, since these correspond to mathematically distinct behaviors of the age process.

\subsection{Cumulative burden}

In addition to cycle-level age, the framework also considers the cumulative burden up to cycle \(N\):
\begin{equation}
\mathcal{C}_N := \sum_{n=1}^{N} A_n.
\end{equation}

This quantity represents the total structural load accumulated across the first \(N\) cycles. Its introduction is necessary because bounded cycle-level age does not by itself imply bounded accumulated burden over time.

This distinction is one of the central conceptual features of the framework. The quantity \(A_n\) describes the local structural condition of the system at cycle \(n\). By contrast, \(\mathcal{C}_N\) describes the total burden carried across a growing operational horizon. A system may therefore remain age-bounded at each individual cycle while still accumulating an unbounded total burden when repeated often enough.

For this reason, long-run persistence cannot be evaluated solely at the level of \(A_n\). Both local and cumulative behavior must be studied. The former concerns pointwise structural control, whereas the latter concerns the total cost of indefinite repetition.
\subsection{Asymptotic operating regimes}

The framework distinguishes several asymptotic regimes of repeated artificial operation.

\begin{definition}[Burdened persistent regime]
A system is said to be in a burdened persistent regime if there exists \(A^\ast>0\) such that
\begin{equation}
A_n\to A^\ast.
\end{equation}
\end{definition}

In this regime, the age sequence stabilizes, but it stabilizes at a strictly positive level. The system therefore remains operational under a persistent non-vanishing cycle-level burden. This corresponds to long-run continuation with structural cost.

\begin{definition}[Zero-burden persistent regime]
A system is said to be in a zero-burden persistent regime if
\begin{equation}
A_n\to 0.
\end{equation}
\end{definition}

This is the strongest non-terminal regime in the present framework. Here the system not only avoids explosive age growth, but asymptotically suppresses its cycle-level structural burden. Repeated operation continues, yet the age attached to each cycle tends to disappear.

\begin{definition}[Oscillatory persistent regime]
A system is said to be in an oscillatory persistent regime if the sequence \(\{A_n\}\) remains bounded but does not converge.
\end{definition}

This regime represents persistence without asymptotic settling. The system does not exhibit explosive aging, but neither does it approach a single limiting burden level. Instead, its cycle-level age continues to fluctuate within a bounded range.

\begin{definition}[Cumulative terminal burden]
A system is said to exhibit cumulative terminal burden if
\begin{equation}
\mathcal{C}_N\to\infty.
\end{equation}
\end{definition}

This notion is deliberately defined at the cumulative level rather than at the pointwise level. Under the present model, the possibility of explosive cycle-age growth is excluded once boundedness is established. Nevertheless, cumulative divergence may still occur. A system may therefore remain locally age-bounded while carrying an unbounded total burden over an infinite horizon.

These regimes clarify the sense in which long-run persistence is understood in the present study. The relevant question is not simply whether the system continues to operate, but whether its age stabilizes at a positive level, vanishes, oscillates without convergence, or accumulates without bound in the aggregate. The theory of persistence is thus organized not around a binary contrast between success and failure, but around a hierarchy of asymptotic burden profiles.
\section{Basic Analytical Properties}

\subsection{Positivity and uniform upper bound of the kernel}

The first step is to establish that the penalty kernel is always nonnegative and remains uniformly bounded on its admissible domain.

\begin{lemma}[Positivity and uniform upper bound]
For every \(x\in(0,1]\),
\begin{equation}
0\le \phi(x)<B_\varepsilon,
\end{equation}
where
\begin{equation}
B_\varepsilon:=\log_2\!\left(\frac{1+\varepsilon}{\varepsilon}\right).
\end{equation}
\end{lemma}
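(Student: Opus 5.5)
The plan is to reduce both inequalities to monotonicity of the base-2 logarithm, using the second form of the kernel,
\[
\phi(x)=\log_2\!\left(\frac{1+\varepsilon}{x+\varepsilon}\right).
\]
Since \(\log_2\) is strictly increasing on \((0,\infty)\), every inequality on \(\phi(x)\) becomes an inequality on the ratio \((1+\varepsilon)/(x+\varepsilon)\). First I will record that this ratio is well defined and positive for all \(x\in(0,1]\): because \(\varepsilon>0\) and \(x>0\), the denominator satisfies \(x+\varepsilon>\varepsilon>0\).

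For the lower bound, I will use \(x\le 1\). This gives \(x+\varepsilon\le 1+\varepsilon\), so the ratio is at least \(1\). Applying \(\log_2\) then gives \(\phi(x)\ge\log_2 1=0\), with equality exactly at \(x=1\).

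For the upper bound, I will use the strict inequality \(x>0\). This gives \(x+\varepsilon>\varepsilon\), hence
\[
\frac{1+\varepsilon}{x+\varepsilon}<\frac{1+\varepsilon}{\varepsilon},
\]
because the numerator is positive. Strict monotonicity of \(\log_2\) then yields \(\phi(x)<B_\varepsilon\).

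The only point requiring care is keeping the upper bound strict. Strictness rests on excluding \(x=0\) from the domain; \(B_\varepsilon\) is the supremum of \(\phi\) on \((0,1]\), approached as \(x\to0^+\) but never attained. I will also remark that \(\phi\) is strictly decreasing on the domain, which gives the whole lemma at once: the range of \(\phi\) is exactly \([0,B_\varepsilon)\). This observation is not required for the proof.
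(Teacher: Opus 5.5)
Your proposal is correct and matches the paper's proof essentially step for step. Both derive the lower bound from $x\le 1$ (the ratio is at least $1$) and the strict upper bound from $x>0$ (so $x+\varepsilon>\varepsilon$), then pass each through the monotonicity of $\log_2$; your explicit use of \emph{strict} monotonicity to keep the upper bound strict is slightly more careful than the paper's wording.
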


\begin{proof}
Since
\[
\phi(x)=\log_2\!\left(\frac{1+\varepsilon}{x+\varepsilon}\right),
\]
and since \(x\le 1\), one has
\[
x+\varepsilon\le 1+\varepsilon.
\]
Hence,
\[
\frac{1+\varepsilon}{x+\varepsilon}\ge 1,
\]
and therefore
\[
\phi(x)\ge 0.
\]

On the other hand, because \(x>0\), it follows that
\[
x+\varepsilon>\varepsilon.
\]
Therefore,
\[
\frac{1+\varepsilon}{x+\varepsilon}<\frac{1+\varepsilon}{\varepsilon}.
\]
Since the logarithm is increasing, it follows that
\[
\phi(x)<\log_2\!\left(\frac{1+\varepsilon}{\varepsilon}\right)=B_\varepsilon.
\]
\end{proof}

This lemma establishes two basic facts. First, the kernel never assigns negative burden. Second, it cannot become arbitrarily large on the admissible domain. These properties are the foundation for the boundedness of the cycle-level age functional.
\subsection{Monotonicity and convexity of the kernel}

The next result describes how the kernel responds to changes in component consistency.

\begin{lemma}[Monotonicity and convexity]
The kernel \(\phi\) is strictly decreasing and strictly convex on \((0,1]\). In particular,
\begin{equation}
\phi'(x)=-\frac{1}{(x+\varepsilon)\ln 2}<0,
\end{equation}
and
\begin{equation}
\phi''(x)=\frac{1}{(x+\varepsilon)^2\ln 2}>0.
\end{equation}
\end{lemma}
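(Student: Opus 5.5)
The plan is a direct differentiation argument. I will rewrite the kernel in natural logarithms,
\[
\phi(x)=\frac{\ln(1+\varepsilon)-\ln(x+\varepsilon)}{\ln 2},
\]
which is valid on \((0,1]\) because \(x+\varepsilon>\varepsilon>0\) there. In fact the expression is smooth on the larger open interval \((-\varepsilon,\infty)\), so derivatives may be taken at every point of \((0,1]\), including the endpoint \(x=1\), in the ordinary two-sided sense. The constant term \(\ln(1+\varepsilon)/\ln 2\) drops out under differentiation. The chain rule applied to \(\ln(x+\varepsilon)\) then gives
\[
\phi'(x)=-\frac{1}{(x+\varepsilon)\ln 2},
\]
and differentiating \(-(x+\varepsilon)^{-1}/\ln 2\) once more gives
\[
\phi''(x)=\frac{1}{(x+\varepsilon)^2\ln 2}.
\]

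For the signs, I will note that \(\ln 2>0\) and \(x+\varepsilon>0\) on the domain. It follows that \(\phi'(x)<0\) and \(\phi''(x)>0\) for every \(x\in(0,1]\).

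To pass from these pointwise signs to the global statements, I will use the mean value theorem on the interval \((0,1]\), which is convex. For \(0<x<y\le 1\), there is some \(\xi\in(x,y)\) with \(\phi(y)-\phi(x)=\phi'(\xi)(y-x)\). Since \(\phi'(\xi)<0\), this gives \(\phi(y)<\phi(x)\), so \(\phi\) is strictly decreasing.

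For strict convexity, the same argument applied to \(\phi'\) shows that \(\phi'\) is strictly increasing, because \(\phi''>0\). A differentiable function with strictly increasing derivative on an interval is strictly convex. I will make this explicit with the standard argument. Fix \(x<y\) in \((0,1]\), \(t\in(0,1)\), and set \(z=tx+(1-t)y\). Apply the mean value theorem separately on \([x,z]\) and on \([z,y]\). This produces points \(\xi_1<z<\xi_2\) with \(\phi'(\xi_1)<\phi'(\xi_2)\). Rearranging then yields \(\phi(z)<t\phi(x)+(1-t)\phi(y)\).

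No step here is a genuine obstacle. The only care needed is to justify differentiability at the endpoint \(x=1\), which is handled by the smooth extension to \((-\varepsilon,\infty)\) noted above, and to keep the constant \(1/\ln 2\) from the base-2 logarithm correctly in both derivatives.
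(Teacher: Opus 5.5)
Your proposal is correct and follows the same route as the paper: write $\phi$ as a constant minus $\log_2(x+\varepsilon)$, differentiate twice, and read off the signs of $\phi'$ and $\phi''$. The only difference is that you spell out the mean value theorem steps from $\phi'<0$ and $\phi''>0$ to strict monotonicity and strict convexity, and you justify differentiability at $x=1$ via the smooth extension; the paper simply asserts both.
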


\begin{proof}
The kernel may be written as
\[
\phi(x)=\log_2(1+\varepsilon)-\log_2(x+\varepsilon).
\]
Differentiation gives
\[
\phi'(x)=-\frac{1}{(x+\varepsilon)\ln 2},
\]
which is strictly negative for all \(x\in(0,1]\). Differentiating once more yields
\[
\phi''(x)=\frac{1}{(x+\varepsilon)^2\ln 2}>0.
\]
Thus, \(\phi\) is strictly decreasing and strictly convex on \((0,1]\).
\end{proof}

This result provides the local geometry of the penalty function. A higher consistency level produces a smaller age penalty, and the convexity means that deterioration is penalized more sharply in already weak regions. The framework is therefore designed to be more sensitive where vulnerability is already high.

This point is fully consistent with the severity logic introduced in 2.3. There, larger underlying violation magnitudes were taken to reduce consistency. Since \(\phi\) is strictly decreasing in \(x\), any increase in violation severity that lowers consistency necessarily raises the penalty term. Thus, the chain
\begin{equation}
V_{n,i}\uparrow \;\Rightarrow\; x_{n,i}\downarrow \;\Rightarrow\; \phi(x_{n,i})\uparrow
\end{equation}
follows from the combination of the consistency transformation and the monotonicity of \(\phi\). In this sense, Lemma 2 provides the analytical foundation for the severity channel of the model.
\subsection{Well-definedness and boundedness of cycle-level age}

The previous lemmas immediately imply that the cycle-level age is mathematically well defined.

\begin{theorem}[Well-definedness and uniform boundedness]
For every cycle \(n\),
\begin{equation}
0\le A_n<B_\varepsilon.
\end{equation}
In particular, the cycle-level age is well defined and uniformly bounded.
\end{theorem}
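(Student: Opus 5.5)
The plan is to bound each summand of $A_n$ termwise using the positivity and uniform upper bound lemma for $\phi$, and then aggregate using the weight normalization $\sum_{i=1}^m w_i = 1$. Well-definedness comes first and is immediate. For every cycle $n$ and component $i$, the admissibility condition $x_{n,i}\in(0,1]$ gives $x_{n,i}+\varepsilon>0$, so $\phi(x_{n,i})$ is a finite real number. Since $m<\infty$, $A_n$ is then a finite sum of finite real numbers, with no convergence issues.

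For the lower bound, I would note that each factor in $w_i(1-R_{n,i})\phi(x_{n,i})$ is nonnegative. Indeed $w_i\ge 0$ by the domain restrictions, $1-R_{n,i}\ge 0$ because $R_{n,i}\le 1$, and $\phi(x_{n,i})\ge 0$ by the positivity lemma. Summing these nonnegative terms gives $A_n\ge 0$.

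For the upper bound, the key pointwise estimate is $0\le (1-R_{n,i})\phi(x_{n,i}) < B_\varepsilon$ for every $i$. I would prove it in two cases.
\begin{itemize}
\item If $R_{n,i}<1$, then $0<1-R_{n,i}\le 1$, and the strict bound $\phi(x_{n,i})<B_\varepsilon$ gives $(1-R_{n,i})\phi(x_{n,i})\le\phi(x_{n,i})<B_\varepsilon$.
\item If $R_{n,i}=1$, the product is $0$, and $0<B_\varepsilon$ because $(1+\varepsilon)/\varepsilon>1$.
\end{itemize}
Multiplying by $w_i$ yields $w_i(1-R_{n,i})\phi(x_{n,i})\le w_iB_\varepsilon$. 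This inequality is strict whenever $w_i>0$, i.e.\ whenever component $i$ is active.

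The only delicate point, and the one I expect to need the most care, is keeping the \emph{strict} inequality after summation, since inactive components contribute only the non-strict bound $0\le 0$. The resolution is that $\sum_i w_i=1$ forces at least one active component $i_0$. Summing the termwise bounds then gives
\[
A_n=\sum_{i=1}^m w_i(1-R_{n,i})\phi(x_{n,i}) < \sum_{i=1}^m w_iB_\varepsilon = B_\varepsilon ,
\]
where the strictness comes from the $i_0$ term. Since $B_\varepsilon$ depends only on $\varepsilon$ and not on $n$, the bound is uniform in $n$, which gives the stated uniform boundedness of $\{A_n\}$.
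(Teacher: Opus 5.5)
Your proof is correct and follows the paper's route: termwise nonnegativity from $w_i\ge 0$, $1-R_{n,i}\in[0,1]$ and Lemma 1, then the termwise bound $w_i(1-R_{n,i})\phi(x_{n,i})\le w_iB_\varepsilon$ summed against $\sum_i w_i=1$. Your handling of strictness improves on the paper. The paper asserts $w_i(1-R_{n,i})\phi(x_{n,i})<w_iB_\varepsilon$ for every $i$, which fails when $w_i=0$, whereas you correctly take strictness from an active component. Your case split on $R_{n,i}=1$ is unnecessary, since $(1-R_{n,i})\phi(x_{n,i})\le\phi(x_{n,i})<B_\varepsilon$ holds in all cases.
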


\begin{proof}
For each component \(i\), the following conditions hold:
\[
w_i\ge 0,\qquad 1-R_{n,i}\in[0,1],\qquad \phi(x_{n,i})\ge 0,
\]
where the last inequality follows from Lemma 1. Hence,
\[
w_i(1-R_{n,i})\phi(x_{n,i})\ge 0.
\]
Summing over all \(i\) gives
\[
A_n\ge 0.
\]

Again by Lemma 1,
\[
\phi(x_{n,i})<B_\varepsilon
\]
for every component. Since \(1-R_{n,i}\le 1\), one obtains
\[
w_i(1-R_{n,i})\phi(x_{n,i})<w_iB_\varepsilon.
\]
Summing over all components yields
\[
A_n<B_\varepsilon\sum_{i=1}^{m}w_i=B_\varepsilon.
\]
Therefore,
\[
0\le A_n<B_\varepsilon.
\]
\end{proof}

This theorem is one of the central structural results of the framework. It shows that the age assigned to any individual cycle is always finite, nonnegative, and uniformly controlled. The cycle index may extend indefinitely, but the pointwise age value remains confined to a fixed bounded interval.

The theorem also clarifies the role of redundancy correction in the boundedness argument. The factor \((1-R_{n,i})\) can only reduce, and never enlarge, the raw penalty contribution \(\phi(x_{n,i})\), since \(R_{n,i}\in[0,1]\). Thus, redundancy adjustment preserves the boundedness of the age functional while discounting the effective independent burden attributed to each component.
\subsection{Immediate consequence for the age sequence}

Theorem 1 implies an immediate global property of the sequence \(\{A_n\}_{n=1}^{\infty}\).

\begin{corollary}[Uniform boundedness of the age sequence]
The sequence \(\{A_n\}_{n=1}^{\infty}\) is uniformly bounded in \([0,B_\varepsilon)\).
\end{corollary}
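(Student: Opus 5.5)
The corollary follows directly from Theorem~1; the plan is to make explicit that the bound there is \emph{uniform} in the cycle index. Theorem~1 gives, for every cycle \(n\), the two-sided estimate \(0\le A_n<B_\varepsilon\). The key observation is that the constant \(B_\varepsilon=\log_2\!\left(\frac{1+\varepsilon}{\varepsilon}\right)\) depends only on the fixed regularization parameter \(\varepsilon>0\). It does not depend on any of the following:
\begin{itemize}
\item the cycle index \(n\);
\item the consistency levels \(x_{n,i}\);
\item the redundancy terms \(R_{n,i}\);
\item the weights \(w_i\), beyond their normalization \(\sum_i w_i=1\);
\item the number of components \(m\).
\end{itemize}

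Concretely, I would proceed in two steps. First, fix an arbitrary \(n\in\mathbb{N}\) and invoke Theorem~1 to obtain \(A_n\in[0,B_\varepsilon)\). Second, note that \(n\) was arbitrary and the endpoints of the interval were chosen independently of \(n\). It follows that \(A_n\in[0,B_\varepsilon)\) for all \(n\), so the whole sequence lies in this fixed interval. In particular,
\[
\sup_{n\ge 1}|A_n|\le B_\varepsilon<\infty ,
\]
which is the statement that \(\{A_n\}_{n=1}^{\infty}\) is uniformly bounded.

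There is essentially no obstacle here. The only point that deserves care is the remark that uniformity rests on the \(n\)-independence of \(B_\varepsilon\), which in turn rests on two facts from Lemma~1: the bound \(\phi(x)<B_\varepsilon\) holds uniformly over \(x\in(0,1]\), and the weights are normalized. I would also add a brief remark on the supremum. Each \(A_n\) satisfies the strict inequality \(A_n<B_\varepsilon\), but one may only conclude \(\sup_n A_n\le B_\varepsilon\), not that the supremum is strict. This does not affect the claim that the sequence lies in \([0,B_\varepsilon)\).
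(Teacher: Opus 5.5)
Your proposal is correct and matches the paper's argument, which also applies Theorem 1 at each cycle and uses that $B_\varepsilon$ depends only on $\varepsilon$, not on $n$. Your extra remark that $\sup_n A_n \le B_\varepsilon$ need not be strict is accurate and does not affect the claim.
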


\begin{proof}
This follows directly from Theorem 1, since the stated bound holds for every cycle \(n\).
\end{proof}

This corollary may appear elementary, but it has important theoretical consequences. It rules out explosive pointwise aging at the level of individual cycles and makes it meaningful to study long-run persistence over an unbounded horizon without leaving a bounded age domain.

\subsection{Why these properties matter for the theory}

Taken together, Lemmas 1 and 2 and Theorem 1 establish the basic mathematical stability of the proposed age functional. The kernel is nonnegative, bounded, decreasing, and convex, while the cycle-level age is well defined and uniformly bounded for all cycles. These results ensure that the subsequent persistence analysis is built on a controlled formal object rather than on an unstable or ill-posed score.

In conceptual terms, this means that the framework is strong enough to support long-run asymptotic analysis. The severity channel is well behaved because larger violations, once transformed into lower consistency values, necessarily induce larger penalties through the monotonic structure of \(\phi\). The redundancy channel is also well behaved because \(R_{n,i}\in[0,1]\) can only reduce the effective contribution of a component-level penalty rather than destabilize it. Since the cycle-level age cannot become negative and cannot explode pointwise, the relevant theoretical questions shift from mere definability to the more substantive issues of ordering, perturbation sensitivity, stabilization, oscillation, and cumulative burden. The next section develops these properties.

\section{Ordering and Stability Theory}

\subsection{Comparative age ordering}

A useful age functional should do more than assign bounded values. It should also induce a meaningful ordering across different system states. The next result shows that the proposed age functional respects the intuitive comparison that lower consistency and lower redundancy correction should not produce a smaller age.

\begin{theorem}[Comparative age ordering]
Let
\begin{equation}
A(x,R)=\sum_{i=1}^{m} w_i(1-R_i)\phi(x_i), \qquad A(y,S)=\sum_{i=1}^{m} w_i(1-S_i)\phi(y_i).
\end{equation}
If, for every \(i\),
\[
x_i\le y_i \qquad \text{and} \qquad R_i\le S_i,
\]
then
\begin{equation}
A(x,R)\ge A(y,S).
\end{equation}
\end{theorem}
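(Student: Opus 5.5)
The plan is to compare the two age functionals termwise: for each fixed component \(i\), show
\[
w_i(1-R_i)\phi(x_i)\ge w_i(1-S_i)\phi(y_i),
\]
and then sum over \(i=1,\dots,m\). Since both \(A(x,R)\) and \(A(y,S)\) are finite weighted sums with the same nonnegative weights \(w_i\), termwise domination immediately gives \(A(x,R)\ge A(y,S)\). The weight factor plays no role beyond \(w_i\ge 0\), which preserves inequalities under multiplication.

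For the termwise inequality I would insert the intermediate quantity \((1-R_i)\phi(y_i)\) and compare in two steps. First, since \(x_i\le y_i\) and \(\phi\) is strictly decreasing on \((0,1]\) by Lemma 2, one has \(\phi(x_i)\ge\phi(y_i)\). Multiplying by \(1-R_i\ge 0\) yields
\[
(1-R_i)\phi(x_i)\ge(1-R_i)\phi(y_i).
\]
Second, since \(R_i\le S_i\), one has \(1-R_i\ge 1-S_i\). Multiplying by \(\phi(y_i)\ge 0\), which holds by Lemma 1, yields
\[
(1-R_i)\phi(y_i)\ge(1-S_i)\phi(y_i).
\]
Chaining the two inequalities and multiplying by \(w_i\ge 0\) gives the termwise claim.

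The only point requiring care, rather than a genuine obstacle, is the sign bookkeeping. Each multiplication must be by a nonnegative factor: \(1-R_i\in[0,1]\) from the domain restriction, and \(\phi(y_i)\ge 0\) from Lemma 1. Without the nonnegativity of the kernel, the redundancy comparison could reverse. I would also note that only weak inequality is claimed, so there is no need to track which components are active or whether the strict monotonicity in Lemma 2 produces strict inequality.
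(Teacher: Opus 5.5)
Your proof is correct and follows essentially the same route as the paper: a termwise comparison using the monotonicity of \(\phi\) from Lemma 2 and \(1-R_i\ge 1-S_i\), followed by multiplication by \(w_i\ge 0\) and summation. Inserting the intermediate term \((1-R_i)\phi(y_i)\) simply spells out the paper's brief appeal to ``these factors are nonnegative,'' and you correctly isolate that only \(1-R_i\ge 0\) and \(\phi(y_i)\ge 0\) are needed.
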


\begin{proof}
Since \(x_i\le y_i\) and the kernel \(\phi\) is strictly decreasing by Lemma 2, it follows that
\[
\phi(x_i)\ge \phi(y_i).
\]
Likewise, from \(R_i\le S_i\), one obtains
\[
1-R_i\ge 1-S_i.
\]
Because these factors are nonnegative, it follows that
\[
(1-R_i)\phi(x_i)\ge (1-S_i)\phi(y_i)
\]
for every \(i\). Multiplying by \(w_i\ge 0\) and summing over all components yields
\[
A(x,R)\ge A(y,S).
\]
\end{proof}

This theorem shows that the age functional is order-consistent. If one system is nowhere better in consistency and nowhere better in redundancy correction, then it cannot be assigned a smaller structural age. In terms of the two-channel interpretation introduced earlier, the theorem states that deterioration in the severity channel and weakening in the redundancy channel both move the effective age contribution in the same direction: they do not reduce structural burden, but increase or preserve it. In this sense, the functional behaves as a coherent comparative measure rather than as a purely numerical aggregation without interpretive order.
\subsection{Sensitivity with respect to consistency and redundancy perturbations}

For long-run analysis, it is not enough to know that the age functional is ordered. One must also know whether small changes in internal variables produce controlled changes in age. The next theorem establishes such a perturbation bound.

\begin{theorem}[Sensitivity bound]
Fix \(\alpha>0\), and suppose that
\[
x_i,y_i\in[\alpha,1] \qquad \text{for all } i.
\]
Then
\begin{equation}
|A(x,R)-A(y,S)| \le L_\alpha \sum_{i=1}^{m} w_i|x_i-y_i| + B_\varepsilon \sum_{i=1}^{m} w_i|R_i-S_i|,
\end{equation}
where
\begin{equation}
L_\alpha:=\frac{1}{(\alpha+\varepsilon)\ln 2}.
\end{equation}
\end{theorem}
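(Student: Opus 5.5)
The plan is to split each component's contribution by an add-and-subtract step, then bound the two resulting pieces separately. For each $i$ write
\[
(1-R_i)\phi(x_i)-(1-S_i)\phi(y_i) = (1-R_i)\bigl(\phi(x_i)-\phi(y_i)\bigr) + \bigl(S_i-R_i\bigr)\phi(y_i).
\]
Multiplying by $w_i\ge 0$, summing over $i$, and applying the triangle inequality gives
\[
|A(x,R)-A(y,S)| \le \sum_{i=1}^m w_i(1-R_i)|\phi(x_i)-\phi(y_i)| + \sum_{i=1}^m w_i|R_i-S_i|\,\phi(y_i).
\]

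The second sum is immediate. By Lemma 1 we have $0\le\phi(y_i)<B_\varepsilon$, so it is bounded by $B_\varepsilon\sum_i w_i|R_i-S_i|$.

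The first sum needs a Lipschitz bound for $\phi$ on $[\alpha,1]$.
\begin{itemize}
\item Since $0\le 1-R_i\le 1$, the prefactor can be dropped.
\item For $x_i\ne y_i$, both points lie in the interval $[\alpha,1]$, so the mean value theorem gives $|\phi(x_i)-\phi(y_i)|=|\phi'(\xi_i)|\,|x_i-y_i|$ for some $\xi_i$ between them, hence with $\xi_i\in[\alpha,1]$.
\item By Lemma 2, $|\phi'(\xi)|=\frac{1}{(\xi+\varepsilon)\ln 2}$, which is decreasing in $\xi$. It is therefore maximized at $\xi=\alpha$, giving $|\phi'(\xi_i)|\le L_\alpha$.
\item When $x_i=y_i$ the bound holds trivially.
\end{itemize}
This yields the first term $L_\alpha\sum_i w_i|x_i-y_i|$, and combining the two pieces gives the stated inequality.

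No step is a serious obstacle. The only point needing care is that the mean value point stays inside $[\alpha,1]$, where the derivative bound is valid; this is exactly why the hypothesis $x_i,y_i\ge\alpha$ is imposed. The algebraic decomposition should also be chosen so that the redundancy difference multiplies $\phi(y_i)$, which is controlled by $B_\varepsilon$ via Lemma 1, while the kernel difference multiplies $1-R_i\le 1$.
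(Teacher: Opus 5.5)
Your proposal is correct and matches the paper's proof essentially step for step: the same add-and-subtract identity $(1-R_i)\bigl(\phi(x_i)-\phi(y_i)\bigr)+(S_i-R_i)\phi(y_i)$, the bound $\phi(y_i)<B_\varepsilon$ from Lemma 1 together with $0\le 1-R_i\le 1$, and the mean value theorem with $|\phi'|\le L_\alpha$ on $[\alpha,1]$. You also check explicitly that the intermediate point $\xi_i$ stays in $[\alpha,1]$, which is where the hypothesis $x_i,y_i\ge\alpha$ is used; the paper leaves this implicit.
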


\begin{proof}
Write
\[
A(x,R)-A(y,S) = \sum_{i=1}^{m} w_i\Big[(1-R_i)\phi(x_i)-(1-S_i)\phi(y_i)\Big].
\]
For each component \(i\), one has the identity
\[
(1-R_i)\phi(x_i)-(1-S_i)\phi(y_i)
=
(1-R_i)\big(\phi(x_i)-\phi(y_i)\big) + (S_i-R_i)\phi(y_i).
\]
Taking absolute values gives
\[
\left|(1-R_i)\phi(x_i)-(1-S_i)\phi(y_i)\right|
\le
|1-R_i|\,|\phi(x_i)-\phi(y_i)| + |S_i-R_i|\,|\phi(y_i)|.
\]

Since \(R_i\in[0,1]\), one has
\[
|1-R_i|\le 1.
\]
Also, by Lemma 1,
\[
|\phi(y_i)|\le B_\varepsilon.
\]

Next, by the Mean Value Theorem and Lemma 2, the derivative of \(\phi\) satisfies
\[
|\phi'(x)|\le \frac{1}{(\alpha+\varepsilon)\ln 2}=L_\alpha \qquad \text{for all } x\in[\alpha,1].
\]
Hence,
\[
|\phi(x_i)-\phi(y_i)| \le L_\alpha |x_i-y_i|.
\]
Therefore,
\[
\left|(1-R_i)\phi(x_i)-(1-S_i)\phi(y_i)\right|
\le
L_\alpha |x_i-y_i| + B_\varepsilon |R_i-S_i|.
\]
Multiplying by \(w_i\) and summing over all \(i\) yields
\[
|A(x,R)-A(y,S)| \le L_\alpha \sum_{i=1}^{m} w_i|x_i-y_i| + B_\varepsilon \sum_{i=1}^{m} w_i|R_i-S_i|.
\]
\end{proof}

This theorem gives a weighted Lipschitz-type stability estimate. It shows that the age functional does not react in an uncontrolled way to small perturbations in component consistency and redundancy variables, provided consistency remains bounded away from zero by \(\alpha\). In the language of the two-channel model, the theorem shows that neither the severity channel nor the redundancy channel can generate arbitrarily large jumps in structural age from arbitrarily small perturbations. This is important because a long-run persistence theory would be of limited use if small internal changes could produce unbounded discontinuities in the effective burden process.
\subsection{Continuity and componentwise convergence}

The previous theorem controls finite perturbations between two states. The next result addresses repeated operation directly by showing that stabilization of the internal variables implies stabilization of the age sequence.

\begin{theorem}[Componentwise convergence implies age convergence]
If, for every \(i\),
\[
x_{n,i}\to x_i^\ast\in(0,1], \qquad R_{n,i}\to R_i^\ast\in[0,1],
\]
then
\begin{equation}
A_n\to A^\ast,
\end{equation}
where
\begin{equation}
A^\ast=\sum_{i=1}^{m} w_i(1-R_i^\ast)\phi(x_i^\ast).
\end{equation}
\end{theorem}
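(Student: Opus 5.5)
The plan is to reduce the claim to the Sensitivity bound, applied at each cycle \(n\) with the comparison state \((x^\ast,R^\ast)\). The first step is to secure a uniform lower bound on the consistency levels. The limits satisfy \(x_i^\ast>0\), and there are only finitely many components, so set
\[
\alpha:=\tfrac12\min_{1\le i\le m} x_i^\ast>0.
\]
Because \(x_{n,i}\to x_i^\ast\) for each \(i\), there is an index \(N_0\) such that \(x_{n,i}\ge\alpha\) for all \(i\) and all \(n\ge N_0\). Also \(x_i^\ast\ge 2\alpha>\alpha\), and all these values lie in \((0,1]\). Hence both states belong to \([\alpha,1]^m\) once \(n\ge N_0\).

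The second step applies the Sensitivity bound for \(n\ge N_0\), with \((x,R)=(x_{n},R_{n})\) and \((y,S)=(x^\ast,R^\ast)\). This gives
\[
|A_n-A^\ast|\le L_\alpha\sum_{i=1}^m w_i|x_{n,i}-x_i^\ast|+B_\varepsilon\sum_{i=1}^m w_i|R_{n,i}-R_i^\ast|.
\]
The right-hand side is a finite sum, each term of which tends to zero by hypothesis. The constants \(L_\alpha\) and \(B_\varepsilon\) do not depend on \(n\). Therefore \(A_n\to A^\ast\). Note that \(A^\ast\) is well defined, since \(x_i^\ast\in(0,1]\) and \(R_i^\ast\in[0,1]\) put it inside the domain covered by Theorem 1.

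The only real obstacle is the first step. The Lipschitz constant of \(\phi\) is controlled only away from zero, so we need the uniform lower bound \(\alpha\). That bound is available precisely because the limits \(x_i^\ast\) lie strictly inside \((0,1]\) and \(m<\infty\).

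As an alternative, one could bypass this issue using continuity of \(\phi\) on all of \((-\varepsilon,1]\) together with the algebra of limits, since products and finite sums of convergent sequences converge. This works even without the lower bound, because \(\phi\) extends continuously to \(x=0\). I would mention this in a remark, but I would present the quantitative route because it ties the result to the earlier theorem.
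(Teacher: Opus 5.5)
Your proof is correct, but it takes a different route from the paper. The paper's argument is purely qualitative. Since $\phi$ is continuous on $(0,1]$, one has $\phi(x_{n,i})\to\phi(x_i^\ast)$. Together with $1-R_{n,i}\to 1-R_i^\ast$, the algebra of limits for products and finite sums then gives $A_n\to A^\ast$. This is exactly the argument you put in your closing remark.

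You instead apply the Sensitivity bound against the fixed comparison state $(x^\ast,R^\ast)$. That requires first establishing a floor $\alpha=\tfrac12\min_i x_i^\ast$ which the $x_{n,i}$ eventually respect. This step is sound, and it is exactly where you use $m<\infty$ and $x_i^\ast>0$.

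What your route buys is a quantitative estimate: for $n\ge N_0$,
$|A_n-A^\ast|\le L_\alpha\sum_i w_i|x_{n,i}-x_i^\ast|+B_\varepsilon\sum_i w_i|R_{n,i}-R_i^\ast|$.
So any convergence rate of the component variables transfers directly to the age sequence, and the result is tied to the perturbation theory developed just before it.

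What the paper's route buys is economy. It needs only continuity of $\phi$ at each limit point, with no Lipschitz constant and no auxiliary lower bound. Continuity at $x_i^\ast\in(0,1]$, together with $x_{n,i}\in(0,1]$, already suffices. The continuous extension of $\phi$ to $x=0$ that you mention would matter only if limits $x_i^\ast=0$ were admitted, and the statement excludes them.
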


\begin{proof}
Since the kernel \(\phi\) is continuous on \((0,1]\), the convergence
\[
x_{n,i}\to x_i^\ast
\]
implies
\[
\phi(x_{n,i})\to \phi(x_i^\ast).
\]
Likewise, from
\[
R_{n,i}\to R_i^\ast,
\]
it follows that
\[
1-R_{n,i}\to 1-R_i^\ast.
\]
By continuity of multiplication,
\[
(1-R_{n,i})\phi(x_{n,i}) \to (1-R_i^\ast)\phi(x_i^\ast)
\]
for each component \(i\). Since the number of components is finite, summation preserves the limit, and therefore
\[
A_n = \sum_{i=1}^{m} w_i(1-R_{n,i})\phi(x_{n,i}) \to \sum_{i=1}^{m} w_i(1-R_i^\ast)\phi(x_i^\ast) = A^\ast.
\]
\end{proof}

This theorem connects internal stabilization to observable structural stabilization. If each active component settles to a limiting consistency level and a limiting redundancy correction, then the cycle-level age also settles to a limiting value. In terms of the two-channel interpretation, stabilization in both the severity channel and the redundancy channel induces stabilization of the aggregated age process. The result is especially important because it turns the age sequence into a genuine summary of internal long-run structure rather than merely a detached external score.

\subsection{Stability interpretation of the age functional}

Taken together, Theorems 2 through 4 show that the proposed age functional has three basic stability properties.

First, it respects comparative order: weaker consistency and weaker redundancy correction do not lead to smaller age. Second, it is perturbation-stable: bounded changes in internal variables lead to controlled changes in age. Third, it is continuous under repeated operation: if the component-level variables converge, then the cycle-level age converges as well.

These properties matter because they justify the use of the functional as the basis of a long-run persistence theory. Without comparative order, the score would lack structural meaning. Without perturbation control, it would be too unstable for longitudinal analysis. Without continuity under repeated operation, it would fail to reflect internal stabilization in a coherent way.

More specifically, these properties show that the two channels of the model are analytically coordinated rather than arbitrary. The severity channel, carried by \(\phi(x_{n,i})\), reacts in an ordered and controlled manner to changes in consistency. The redundancy channel, carried by \((1-R_{n,i})\), discounts effective burden without destroying boundedness, order, or continuity. The cycle-level age therefore remains interpretable as an aggregated effective burden process rather than as an unstable numerical construction. The next section builds on these properties to establish the asymptotic persistence results of the framework.
\section{Long-Run Persistence Theory}

The previous sections established that the proposed age functional is well defined, uniformly bounded, order-consistent, perturbation-stable, and continuous under componentwise convergence. The next step is to determine what these properties imply for repeated operation over an unbounded cycle horizon. The present section develops the central long-run results of the framework.

\subsection{Finite-variation persistence}

A natural way to express long-run stabilization is to require that the total amount of cycle-to-cycle fluctuation be finite. The next theorem shows that this condition is already strong enough to force convergence of the age sequence.

\begin{theorem}[Finite-variation persistence]
If
\begin{equation}
\sum_{n=1}^{\infty}|A_{n+1}-A_n|<\infty,
\end{equation}
then the sequence \(\{A_n\}\) converges to some \(A^\ast\in[0,B_\varepsilon]\). Moreover,
\begin{equation}
D_n\to 0 \qquad \text{and} \qquad \Delta_n\to 0.
\end{equation}
\end{theorem}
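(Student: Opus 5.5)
The plan is to show that finite total variation makes \(\{A_n\}\) a Cauchy sequence, and then to use completeness of \(\mathbb{R}\). Fix \(N>M\ge 1\). By telescoping and the triangle inequality,
\[
|A_N-A_M|=\Big|\sum_{n=M}^{N-1}(A_{n+1}-A_n)\Big|\le \sum_{n=M}^{N-1}|A_{n+1}-A_n|\le \sum_{n=M}^{\infty}D_n .
\]
Because the series \(\sum_n D_n\) converges by hypothesis, its tails \(\sum_{n\ge M}D_n\) tend to \(0\) as \(M\to\infty\). Hence for every \(\eta>0\) there is \(M_0\) such that \(|A_N-A_M|<\eta\) for all \(N>M\ge M_0\). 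The sequence is therefore Cauchy, and it converges to some real number \(A^\ast\).

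Next, the limit has to be located in \([0,B_\varepsilon]\). By Theorem 1 (equivalently Corollary 1), \(0\le A_n<B_\varepsilon\) for every \(n\). Non-strict inequalities are preserved under limits, so \(0\le A^\ast\le B_\varepsilon\). The closed right endpoint is the correct target: the strict bound need not survive the limit, which is why the statement asserts \(A^\ast\in[0,B_\varepsilon]\) rather than \([0,B_\varepsilon)\).

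Finally, the inter-cycle quantities. The terms of a convergent series tend to zero, so \(D_n=|A_{n+1}-A_n|\to 0\). Alternatively, this follows from convergence of \(A_n\), since \(|A_{n+1}-A_n|\le |A_{n+1}-A^\ast|+|A_n-A^\ast|\). For the positive part, \(0\le \Delta_n=\max\{0,\delta_n\}\le |\delta_n|=D_n\), and squeezing gives \(\Delta_n\to 0\).

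None of these steps is a real obstacle. The only point needing care is the Cauchy estimate, specifically the tail-of-series argument that converts summability of \(D_n\) into uniform control of \(|A_N-A_M|\). The rest follows directly from Theorem 1 and elementary limit facts.
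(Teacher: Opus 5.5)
Your proof is correct and takes essentially the same route as the paper: telescoping with the triangle inequality to show the sequence is Cauchy, completeness of \(\mathbb{R}\), Theorem 1 to place the limit in \([0,B_\varepsilon]\), the fact that terms of a convergent series tend to zero for \(D_n\to 0\), and the squeeze \(0\le\Delta_n\le D_n\). Your remark that the strict bound \(A_n<B_\varepsilon\) only gives \(A^\ast\le B_\varepsilon\) in the limit is correct, and it explains why the statement uses the closed interval.
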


\begin{proof}
For any integers \(q>p\), one has the telescoping identity
\[
A_q-A_p=\sum_{n=p}^{q-1}(A_{n+1}-A_n).
\]
Taking absolute values yields
\[
|A_q-A_p| \le \sum_{n=p}^{q-1}|A_{n+1}-A_n|.
\]
Since the series
\[
\sum_{n=1}^{\infty}|A_{n+1}-A_n|
\]
converges, its tails converge to zero. Hence \(\{A_n\}\) is a Cauchy sequence. Because \(\mathbb{R}\) is complete, there exists \(A^\ast\in\mathbb{R}\) such that
\[
A_n\to A^\ast.
\]

By Theorem 1,
\[
0\le A_n<B_\varepsilon \qquad \text{for all } n.
\]
Therefore, the limit must satisfy
\[
A^\ast\in[0,B_\varepsilon].
\]

Now recall that
\[
D_n=|A_{n+1}-A_n|.
\]
Since the series \(\sum_{n=1}^{\infty}D_n\) converges and all terms are nonnegative, it follows that
\[
D_n\to 0.
\]
Finally, because
\[
0\le \Delta_n\le D_n,
\]
the squeeze argument gives
\[
\Delta_n\to 0.
\]
\end{proof}

This theorem is fundamental because it shows that full componentwise convergence is not required in order to obtain long-run stabilization. It is enough that the total amount of inter-cycle variation be finite. In that case, the age process converges, total fluctuation eventually disappears, and positive marginal age accumulation vanishes. Since the age functional aggregates both the severity channel and the redundancy-discounted effective burden channel, the theorem shows that long-run stabilization can be obtained at the level of the total effective structural age even without separately imposing convergence of every underlying component process.
\subsection{Geometric stabilization}

The finite-variation condition of Theorem 5 is abstract but powerful. The next theorem gives a concrete sufficient condition for it by assuming that componentwise perturbations decay geometrically over time.

\begin{theorem}[Geometric stabilization]
Let \(\alpha>0\) and \(q\in(0,1)\). Suppose that for every \(i\), there exist constants \(a_i,b_i\ge 0\) such that, for all \(n\),
\[
x_{n,i}\ge \alpha,
\]
\[
|x_{n+1,i}-x_{n,i}|\le a_i q^n,
\]
and
\[
|R_{n+1,i}-R_{n,i}|\le b_i q^n.
\]
Then \(A_n\) converges. More precisely,
\begin{equation}
D_n\le Cq^n,
\end{equation}
where
\begin{equation}
C:=L_\alpha\sum_{i=1}^{m}w_i a_i + B_\varepsilon\sum_{i=1}^{m}w_i b_i.
\end{equation}
Consequently,
\begin{equation}
\sum_{n=1}^{\infty}D_n<\infty, \qquad A_n\to A^\ast, \qquad \Delta_n\to 0.
\end{equation}
\end{theorem}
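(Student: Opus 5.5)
The plan is to reduce everything to the Sensitivity bound (Theorem 3) applied to consecutive cycles, and then invoke Finite-variation persistence (Theorem 5). Fix \(n\) and apply Theorem 3 with \(x=x_{n+1}\), \(R=R_{n+1}\), \(y=x_n\), \(S=R_n\). Its hypothesis is that all consistency values lie in \([\alpha,1]\). This holds because of the standing assumption \(x_{n,i}\ge \alpha\) for all \(n\) and \(i\), together with \(x_{n,i}\le 1\) from the domain restrictions. Theorem 3 then yields
\[
D_n=|A_{n+1}-A_n|\le L_\alpha\sum_{i=1}^m w_i|x_{n+1,i}-x_{n,i}|+B_\varepsilon\sum_{i=1}^m w_i|R_{n+1,i}-R_{n,i}|.
\]

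Next I would insert the geometric hypotheses \(|x_{n+1,i}-x_{n,i}|\le a_iq^n\) and \(|R_{n+1,i}-R_{n,i}|\le b_iq^n\). This is legitimate because \(w_i\ge 0\), \(L_\alpha>0\) and \(B_\varepsilon>0\), so each term can be bounded without flipping inequalities. Factoring out \(q^n\) gives exactly \(D_n\le Cq^n\) with \(C\) as defined in the statement.

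For the consequences, since \(q\in(0,1)\), comparison with the geometric series gives
\[
\sum_{n\ge1}D_n\le C\sum_{n\ge1}q^n=\frac{Cq}{1-q}<\infty.
\]
This is precisely the hypothesis of Theorem 5. That theorem then delivers convergence \(A_n\to A^\ast\in[0,B_\varepsilon]\), together with \(D_n\to0\) and \(\Delta_n\to0\).

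There is no serious obstacle. The only point needing care is checking that Theorem 3 applies to consecutive states: the uniform lower bound \(\alpha\) must hold at both indices \(n\) and \(n+1\), which the "for all \(n\)" hypothesis guarantees. It is also worth noting that the argument never uses convergence of \(x_{n,i}\) or \(R_{n,i}\) themselves. Those limits would exist anyway, since the increments are summable, and Theorem 4 would then identify \(A^\ast\), but this is not needed.
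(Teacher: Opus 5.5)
Your proposal is correct and follows the paper's proof: you apply Theorem 3 to consecutive cycles, substitute the geometric bounds to obtain $D_n\le Cq^n$, compare with the geometric series, and conclude via Theorem 5. Your explicit check that both $x_{n,i}$ and $x_{n+1,i}$ lie in $[\alpha,1]$, so that Theorem 3 applies, is a small detail the paper leaves implicit.
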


\begin{proof}
Apply Theorem 3 to two consecutive cycles \(n\) and \(n+1\). Then
\[
|A_{n+1}-A_n| \le L_\alpha\sum_{i=1}^{m}w_i|x_{n+1,i}-x_{n,i}| + B_\varepsilon\sum_{i=1}^{m}w_i|R_{n+1,i}-R_{n,i}|.
\]
By the stated assumptions,
\[
|A_{n+1}-A_n| \le L_\alpha\sum_{i=1}^{m}w_i a_i q^n + B_\varepsilon\sum_{i=1}^{m}w_i b_i q^n.
\]
Factoring out \(q^n\) gives
\[
D_n=|A_{n+1}-A_n| \le q^n\left( L_\alpha\sum_{i=1}^{m}w_i a_i + B_\varepsilon\sum_{i=1}^{m}w_i b_i \right)=Cq^n.
\]
Since the geometric series converges,
\[
\sum_{n=1}^{\infty}D_n \le C\sum_{n=1}^{\infty}q^n < \infty.
\]
The conclusion therefore follows from Theorem 5.
\end{proof}

This theorem gives a more operational criterion for stabilization. If consistency variables in the severity channel and redundancy corrections in the overlap-discount channel both change less and less over time at a geometric rate, then the aggregate age process inherits that damping. The result is important because it links a concrete decay law at the component level to a provable convergence law at the level of total structural age.
\subsection{Zero-burden necessity and cumulative divergence under positive limiting burden}

The next result separates two different questions: whether the cycle-level burden tends to zero, and whether the cumulative burden remains finite. These are related, but they are not identical.

\begin{theorem}[Zero-burden necessity]
If
\begin{equation}
\sum_{n=1}^{\infty}A_n<\infty,
\end{equation}
then necessarily
\begin{equation}
A_n\to 0.
\end{equation}
\end{theorem}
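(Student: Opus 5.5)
The claim is the term test for convergent series, so the plan is to run the standard Cauchy-criterion argument on the partial sums $\mathcal{C}_N=\sum_{n=1}^{N}A_n$ introduced in Section 2.5.

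\emph{Step 1.} By hypothesis the series $\sum_{n=1}^{\infty}A_n$ converges. By definition this means that $\mathcal{C}_N$ converges to some finite limit $\mathcal{C}_\infty\in\mathbb{R}$. Theorem 1 gives $A_n\ge 0$, so $\mathcal{C}_N$ is also nondecreasing, and $\mathcal{C}_\infty$ is its supremum. Only the finiteness of the limit is needed below.

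\emph{Step 2.} For every $N\ge 2$ we have
\[
A_N=\mathcal{C}_N-\mathcal{C}_{N-1}.
\]
Both $\mathcal{C}_N$ and $\mathcal{C}_{N-1}$ tend to $\mathcal{C}_\infty$ as $N\to\infty$. Their difference therefore tends to $\mathcal{C}_\infty-\mathcal{C}_\infty=0$, which yields $A_N\to 0$.

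\emph{Alternative for Step 2.} One can argue directly from Cauchy tails. Fix $\eta>0$ and choose $N_0$ so that the tail satisfies $\sum_{n\ge N_0}A_n<\eta$. Because every term is nonnegative by Theorem 1, each individual term $A_n$ with $n\ge N_0$ is bounded above by this tail. Hence $0\le A_n<\eta$ for all $n\ge N_0$, which is exactly $A_n\to 0$.

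There is no real obstacle in this argument. The one point that needs care is the precise meaning of the hypothesis. The statement $\sum A_n<\infty$ must be read as convergence of the partial sums $\mathcal{C}_N$ to a real number, not merely as boundedness. For nonnegative terms these two readings coincide, because a bounded monotone sequence converges, and nonnegativity is supplied by Theorem 1. Contrapositively, the result shows that a system in the burdened persistent regime, with $A_n\to A^\ast>0$, must exhibit cumulative terminal burden $\mathcal{C}_N\to\infty$.
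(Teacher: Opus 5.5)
Your proof is correct and follows the paper's route: the paper also uses the nonnegativity from Theorem 1 to treat $\sum_{n=1}^{\infty} A_n$ as a convergent series of nonnegative terms and then invokes the term test. Your Step 2, or equivalently your tail argument, simply proves that test explicitly.
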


\begin{proof}
By Theorem 1,
\[
A_n\ge 0 \qquad \text{for all } n.
\]
Thus, \(\sum_{n=1}^{\infty}A_n\) is a series with nonnegative terms. A necessary condition for convergence of such a series is that its general term tends to zero. Hence,
\[
A_n\to 0.
\]
\end{proof}

This theorem states that finite total accumulated burden is possible only if the cycle-level burden itself eventually vanishes. In other words, if a system keeps paying a non-negligible structural cost per cycle, then that cost cannot remain summable forever.

\begin{corollary}[Positive persistent burden implies divergent cumulative load]
If
\begin{equation}
A_n\to A^\ast>0,
\end{equation}
then
\begin{equation}
\sum_{n=1}^{\infty}A_n=\infty.
\end{equation}
\end{corollary}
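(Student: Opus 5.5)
The plan is to argue by contraposition from Theorem 7 (zero-burden necessity). That theorem states that $\sum_{n=1}^{\infty}A_n<\infty$ forces $A_n\to 0$.

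Suppose $A_n\to A^\ast>0$. If the series $\sum_n A_n$ were finite, Theorem 7 would give $A_n\to 0$. Limits of real sequences are unique, so this would force $A^\ast=0$, contradicting $A^\ast>0$. Hence the series cannot converge.

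It remains to check that divergence here really means $\sum_n A_n=+\infty$, not merely failure to converge. By Theorem 1, $A_n\ge 0$ for all $n$. So the partial sums $\mathcal{C}_N$ are nondecreasing, and a nondecreasing sequence either converges or tends to $+\infty$. Since convergence was excluded, $\mathcal{C}_N\to\infty$, which is exactly cumulative terminal burden.

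A direct alternative, which I would mention as a remark, gives a quantitative rate. Take $\eta=A^\ast/2$. There is an $N_0$ with $A_n\ge A^\ast/2$ for all $n\ge N_0$. Then
\[
\mathcal{C}_N\ge \tfrac{A^\ast}{2}(N-N_0+1)\to\infty,
\]
so the cumulative burden grows at least linearly.

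There is no real obstacle here. The only point needing care is the monotonicity of the partial sums, which rests on the nonnegativity established in Theorem 1.
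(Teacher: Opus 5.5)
Your proof is correct, but your main argument differs from the paper's. The paper's proof is exactly your closing remark: since $A_n\to A^\ast>0$, there is an $N_0$ with $A_n\ge A^\ast/2$ for all $n\ge N_0$, and the tail $\sum_{n\ge N_0}A_n$ is bounded below by a divergent constant series.

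Your primary route reads the corollary as the contrapositive of Theorem 7 (zero-burden necessity), which fits its placement in the paper. However, Theorem 7 is essentially the term test, so it only tells you that the series fails to converge. You correctly saw that one more step is needed: $A_n\ge 0$ (Theorem 1) makes $\mathcal{C}_N$ nondecreasing, so failure to converge becomes $\mathcal{C}_N\to\infty$.

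The direct argument avoids this detour. It uses only the definition of the limit, so it needs no nonnegativity: the first $N_0-1$ terms merely shift the partial sums by a finite constant. It therefore applies to any real sequence with a positive limit. It also gives the quantitative linear growth $\mathcal{C}_N\ge \mathcal{C}_{N_0-1}+\tfrac{A^\ast}{2}(N-N_0+1)$. In your remark, dropping $\mathcal{C}_{N_0-1}$ again uses Theorem 1, which is harmless here.

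Your contrapositive route gains conceptual economy by reusing Theorem 7. It is less informative, though, and depends essentially on Theorem 1. Since your remark already contains the paper's argument, your write-up is complete either way.
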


\begin{proof}
Since \(A_n\to A^\ast>0\), there exists \(N_0\) such that for all \(n\ge N_0\),
\[
A_n\ge \frac{A^\ast}{2}>0.
\]
Hence,
\[
\sum_{n=N_0}^{\infty}A_n \ge \sum_{n=N_0}^{\infty}\frac{A^\ast}{2}=\infty.
\]
Therefore,
\[
\sum_{n=1}^{\infty}A_n=\infty.
\]
\end{proof}

This corollary sharply separates two persistent regimes. A system may converge to a positive limiting burden and remain pointwise stable, but such a system necessarily accumulates infinite total burden over an infinite horizon. By contrast, finite cumulative burden requires the stronger regime \(A_n\to 0\). Since \(A_n\) already includes redundancy discounting, this distinction concerns the effective structural burden that remains after overlap has been taken into account, not merely a raw uncorrected penalty total.

\subsection{No explosive cycle-aging}

A central claim of the framework is that indefinite repetition does not force pointwise age explosion. This is made precise in the following result.

\begin{theorem}[No explosive cycle-aging]
Under the redundancy-adjusted cycle-level age model, the sequence \(\{A_n\}\) cannot diverge to \(+\infty\).
\end{theorem}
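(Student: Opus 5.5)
The plan is to derive the statement directly from Theorem 1, which gives the uniform bound \(0\le A_n<B_\varepsilon\) for every cycle \(n\). Divergence to \(+\infty\) means that for every \(M>0\) there is an \(N\) with \(A_n>M\) for all \(n\ge N\). I will argue by contradiction: suppose \(A_n\to+\infty\), and choose the specific threshold \(M=B_\varepsilon\), which is a finite real number because \(\varepsilon>0\). The divergence assumption then produces an index \(N\) such that \(A_N>B_\varepsilon\). This contradicts Theorem 1 (equivalently, Corollary 1), so no such divergence can occur.

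I will also record the slightly stronger statement that the argument actually gives. Every subsequence of \(\{A_n\}\) is confined to \([0,B_\varepsilon)\), so \(\limsup_{n\to\infty}A_n\le B_\varepsilon<\infty\). In particular, not even unbounded growth along a subsequence is possible. This is the sense of ``no explosive aging'' that the interpretation afterwards relies on.

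There is no real obstacle here. The only point that needs care is to stress that the bound \(B_\varepsilon=\log_2((1+\varepsilon)/\varepsilon)\) does not depend on \(n\), the \(x_{n,i}\), or the \(R_{n,i}\), since it depends only on \(\varepsilon\). This uniformity is what makes Theorem 1 yield a single threshold that works for all cycles. I would also note that the result says nothing about \(\mathcal{C}_N\), which can still diverge, as Corollary 2 shows.
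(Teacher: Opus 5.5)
Your proof is correct and matches the paper's argument: both rest entirely on the uniform bound \(0\le A_n<B_\varepsilon\) from Theorem 1. The paper simply notes that a sequence bounded above cannot tend to \(+\infty\), while you unpack this through the definition of divergence with the threshold \(M=B_\varepsilon\); your extra remarks on \(\limsup_{n\to\infty}A_n\) and on \(\mathcal{C}_N\) are accurate but not needed.
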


\begin{proof}
By Theorem 1,
\[
0\le A_n<B_\varepsilon \qquad \text{for every } n.
\]
Hence, \(\{A_n\}\) is uniformly bounded above. A bounded real sequence cannot satisfy
\[
A_n\to +\infty.
\]
Therefore, explosive cycle-level aging is impossible under the present model.
\end{proof}

This theorem is conceptually decisive, even though it follows directly from boundedness. It shows that terminality cannot be identified with pointwise age explosion. If long-run deterioration occurs in this framework, it must be understood through accumulation, persistence of positive burden, or failure to attenuate---not through divergence of the cycle-level age itself to infinity. This remains true even though the severity channel may increase locally, because the total age process is structurally bounded by construction and discounted by the redundancy channel.
\subsection{Eventual monotonic stabilization}

Long-run convergence can also be obtained from a simpler structural condition: eventual monotonicity. The next theorem formalizes this idea.

\begin{theorem}[Eventual monotonic stabilization]
If there exists \(N_0\) such that for all \(n\ge N_0\),
\begin{equation}
A_{n+1}\ge A_n,
\end{equation}
then \(\{A_n\}\) converges. Likewise, if there exists \(N_0\) such that for all \(n\ge N_0\),
\begin{equation}
A_{n+1}\le A_n,
\end{equation}
then \(\{A_n\}\) also converges.
\end{theorem}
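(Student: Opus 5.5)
The plan is to apply the Monotone Convergence Theorem to the tail sequence \(\{A_n\}_{n\ge N_0}\), with Theorem 1 supplying the needed boundedness. The argument has two symmetric cases.

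\emph{Case 1: eventually nondecreasing.} Suppose \(A_{n+1}\ge A_n\) for all \(n\ge N_0\). Then the tail \(\{A_n\}_{n\ge N_0}\) is nondecreasing. By Theorem 1 (equivalently Corollary 1) it is bounded above by \(B_\varepsilon\). Set
\[
A^\ast:=\sup_{n\ge N_0}A_n .
\]
By completeness of \(\mathbb{R}\), \(A^\ast\) exists and \(A^\ast\le B_\varepsilon\). The standard supremum argument then gives convergence. Given \(\eta>0\), choose \(K\ge N_0\) with \(A_K>A^\ast-\eta\). For every \(n\ge K\), monotonicity gives
\[
A^\ast-\eta<A_K\le A_n\le A^\ast ,
\]
so \(A_n\to A^\ast\).

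\emph{Case 2: eventually nonincreasing.} Suppose \(A_{n+1}\le A_n\) for all \(n\ge N_0\). The same argument applies with the infimum in place of the supremum. Here the lower bound \(A_n\ge 0\) from Theorem 1 guarantees that
\[
A^\ast:=\inf_{n\ge N_0}A_n
\]
exists and lies in \([0,B_\varepsilon)\).

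\emph{Finishing.} Convergence depends only on tails, so the finitely many terms \(A_1,\dots,A_{N_0-1}\) do not affect the limit, and the whole sequence \(\{A_n\}\) converges. Since every term lies in \([0,B_\varepsilon)\), the limit satisfies \(A^\ast\in[0,B_\varepsilon]\).

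There is no real obstacle. The only point requiring care is that both cases need the correct one-sided bound from Theorem 1: the upper bound \(B_\varepsilon\) in Case 1 and the nonnegativity \(A_n\ge 0\) in Case 2.

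An alternative route goes through Theorem 5. If the tail is monotone, the variation sum telescopes to \(|A_M-A_{N_0}|\), which is less than \(B_\varepsilon\) for every \(M\). Hence
\[
\sum_{n=1}^{\infty}|A_{n+1}-A_n|<\infty ,
\]
and Theorem 5 yields convergence, together with \(D_n\to0\) and \(\Delta_n\to0\) as by-products. I would mention this as a remark, since it additionally shows that marginal aging vanishes.
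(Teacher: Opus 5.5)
Your proof is correct and follows the paper's argument: the monotone convergence theorem applied to the tail, with the bound $A_n<B_\varepsilon$ from Theorem 1 in the nondecreasing case and $A_n\ge 0$ in the nonincreasing case. Your remark routing the result through Theorem 5 (the telescoped variation of a monotone tail is below $B_\varepsilon$) is also valid, and it additionally gives $D_n\to 0$ and $\Delta_n\to 0$, which the paper's proof does not state.
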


\begin{proof}
First suppose that
\[
A_{n+1}\ge A_n \qquad \text{for all } n\ge N_0.
\]
Then the sequence is eventually nondecreasing. By Theorem 1, it is bounded above by \(B_\varepsilon\). Hence, convergence follows from the monotone convergence theorem for real sequences.

Now suppose that
\[
A_{n+1}\le A_n \qquad \text{for all } n\ge N_0.
\]
Then the sequence is eventually nonincreasing. Again by Theorem 1, it is bounded below by \(0\). Therefore, convergence follows in this case as well.
\end{proof}

This result shows that once the age process enters a one-sided trend, boundedness alone forces asymptotic settling. In other words, if the system eventually ages only upward or only downward from some point onward, then the age sequence cannot wander indefinitely; it must converge. At the level of interpretation, this means that once the combined effect of severity and redundancy adjustment becomes eventually monotone, the total effective burden must stabilize.

\subsection{Zero-burden characterization under nondegenerate redundancy}

The strongest persistence regime is the one in which cycle-level burden tends to zero. The next theorem shows exactly what this means at the component level, provided redundancy does not asymptotically erase the effective contribution of active components.

\begin{theorem}[Zero-burden characterization under nondegenerate redundancy]
Assume that for every active component \(i\) with \(w_i>0\), there exist constants \(\eta_i>0\) and \(N_i\in\mathbb{N}\) such that
\begin{equation}
1-R_{n,i}\ge \eta_i \qquad \text{for all } n\ge N_i.
\end{equation}
Then
\begin{equation}
A_n\to 0 \quad \Longleftrightarrow \quad x_{n,i}\to 1 \qquad \text{for every active component } i.
\end{equation}
\end{theorem}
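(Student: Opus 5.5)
The plan is to prove the two implications separately, using only nonnegativity of every summand in $A_n$ (from Theorem 1 and Lemma 1), the nondegeneracy bound $1-R_{n,i}\ge\eta_i$, and the fact that $\phi$ is continuous, strictly decreasing and vanishes exactly at $x=1$ (Lemma 2). Finiteness of $m$ will let us pass from componentwise statements to the aggregate and back.

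For the direction ($\Leftarrow$), assume $x_{n,i}\to 1$ for every active $i$. The summands with $w_i=0$ vanish identically, so
\[
0\le A_n=\sum_{i:\,w_i>0} w_i(1-R_{n,i})\phi(x_{n,i})\le \sum_{i:\,w_i>0} w_i\,\phi(x_{n,i}),
\]
using $1-R_{n,i}\le 1$. By continuity of $\phi$ at $1$ and $\phi(1)=0$, each $\phi(x_{n,i})\to 0$. The sum is finite, so the right-hand side tends to $0$, and the squeeze gives $A_n\to 0$. This direction does not even need the redundancy hypothesis.

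For the direction ($\Rightarrow$), assume $A_n\to 0$ and fix an active component $i$. Since every summand is nonnegative, a single summand is dominated by the whole sum, so for $n\ge N_i$,
\[
0\le w_i\eta_i\,\phi(x_{n,i})\le w_i(1-R_{n,i})\phi(x_{n,i})\le A_n.
\]
Hence $\phi(x_{n,i})\le A_n/(w_i\eta_i)\to 0$. The nondegeneracy hypothesis is exactly what makes this division legitimate.

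The remaining, and main, step is to pass from $\phi(x_{n,i})\to 0$ to $x_{n,i}\to 1$; I will invert $\phi$ explicitly rather than argue abstractly. From $\phi(x)=\log_2\!\big((1+\varepsilon)/(x+\varepsilon)\big)$ we get
\[
x_{n,i}=(1+\varepsilon)2^{-\phi(x_{n,i})}-\varepsilon ,
\]
and the right-hand side tends to $(1+\varepsilon)-\varepsilon=1$ by continuity of $t\mapsto 2^{-t}$. An equivalent route is an $\epsilon$-argument: for $\theta\in(0,1)$, strict monotonicity gives $x\le 1-\theta\Rightarrow\phi(x)\ge\phi(1-\theta)>0$, so eventually $x_{n,i}>1-\theta$, and $x_{n,i}\le 1$ closes the argument.

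I expect no real difficulty beyond being careful at two points. First, inactive components must be excluded on both sides. Second, the hypothesis on $R_{n,i}$ is needed only in the forward direction. Without it, $R_{n,i}\to 1$ could drive $A_n\to 0$ with $x_{n,i}$ bounded away from $1$, which shows why the statement requires nondegenerate redundancy.
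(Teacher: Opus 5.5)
Your proof is correct and follows the paper's argument. The ($\Leftarrow$) direction is identical (domination by $\sum_i w_i\phi(x_{n,i})$ together with $\phi(1)=0$), and your ($\Rightarrow$) direction rests on the same single-summand bound $A_n\ge w_i\eta_i\,\phi(x_{n,i})$ for $n\ge N_i$; the paper packages it as a contradiction along a subsequence with $x_{n_k,i_0}\le 1-c$, which is your alternative $\theta$-argument, while your explicit inversion $x=(1+\varepsilon)2^{-\phi(x)}-\varepsilon$ gives a direct proof and even the rate $1-x_{n,i}\le (1+\varepsilon)(\ln 2)\,A_n/(w_i\eta_i)$ for $n\ge N_i$.
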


\begin{proof}
First suppose that
\[
x_{n,i}\to 1 \qquad \text{for every active component } i.
\]
Since \(\phi\) is continuous on \((0,1]\) and
\[
\phi(1)=0,
\]
it follows that
\[
\phi(x_{n,i})\to 0.
\]
Also, because
\[
0\le 1-R_{n,i}\le 1,
\]
one has
\[
0\le w_i(1-R_{n,i})\phi(x_{n,i})\le w_i\phi(x_{n,i})\to 0.
\]
Since the number of components is finite, summation yields
\[
A_n\to 0.
\]
This proves the forward implication.

Conversely, suppose that
\[
A_n\to 0,
\]
but that for some active component \(i_0\), the convergence
\[
x_{n,i_0}\to 1
\]
does not hold. Then there exist a constant \(c>0\) and a subsequence \(n_k\) such that
\[
x_{n_k,i_0}\le 1-c \qquad \text{for all } k.
\]
Because \(\phi\) is strictly decreasing,
\[
\phi(x_{n_k,i_0})\ge \phi(1-c)>0.
\]
Also, for all sufficiently large \(k\),
\[
1-R_{n_k,i_0}\ge \eta_{i_0}>0.
\]
Therefore,
\[
A_{n_k} \ge w_{i_0}(1-R_{n_k,i_0})\phi(x_{n_k,i_0}) \ge w_{i_0}\eta_{i_0}\phi(1-c)>0,
\]
which contradicts the assumption that
\[
A_n\to 0.
\]
Hence,
\[
x_{n,i}\to 1 \qquad \text{for every active component } i.
\]
\end{proof}

This theorem provides the strongest structural interpretation of zero-burden persistence. Under nondegenerate redundancy, vanishing long-run age is equivalent to asymptotically perfect consistency in every active component. In this sense, the zero-burden regime is not a vague global phenomenon; it has a precise componentwise meaning. It also shows that the redundancy channel cannot by itself manufacture zero-burden persistence in the limit if the effective non-redundant share remains bounded away from zero for active components.
\subsection{Persistent imperfection excludes zero-burden persistence}

The contrapositive form of Theorem 10 yields an equally important exclusion result.

\begin{corollary}[Persistent imperfection excludes zero-burden persistence]
If there exist an active component \(i_0\), constants \(c>0\), \(\eta>0\), and infinitely many indices \(n\) such that
\begin{equation}
x_{n,i_0}\le 1-c, \qquad 1-R_{n,i_0}\ge \eta,
\end{equation}
then
\begin{equation}
A_n\not\to 0.
\end{equation}
\end{corollary}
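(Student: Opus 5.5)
The plan is a direct lower-bound argument along the given infinite index set, rather than an appeal to Theorem 10. The hypothesis of the corollary only controls the redundancy factor on the same infinitely many indices where imperfection occurs, not eventually for all \(n\). Theorem 10 needs the nondegeneracy condition \(1-R_{n,i}\ge\eta_i\) for all large \(n\) and for every active component, so it may not apply verbatim. The reverse-implication half of its proof, however, uses only what happens along a subsequence, and that is exactly the situation here.

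First, enumerate the infinitely many indices satisfying the hypothesis as a strictly increasing subsequence \(n_k\). For each \(k\) we then have \(x_{n_k,i_0}\le 1-c\) and \(1-R_{n_k,i_0}\ge\eta\). If \(c\ge 1\), no admissible \(x\in(0,1]\) satisfies \(x\le 1-c\le 0\), so the hypothesis is vacuous or contradictory. We may therefore assume \(0<c<1\), so that \(1-c\in(0,1)\) lies in the domain of \(\phi\).

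Second, use Lemma 2 (strict monotonicity of \(\phi\)) to get \(\phi(x_{n_k,i_0})\ge\phi(1-c)\). Since \(1-c<1\) and \(\phi(1)=0\), strict decrease gives \(\phi(1-c)>0\).

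Third, by Lemma 1 every summand \(w_i(1-R_{n,i})\phi(x_{n,i})\) of \(A_n\) is nonnegative. Dropping all terms except the one for \(i_0\) yields
\[
A_{n_k}\ge w_{i_0}(1-R_{n_k,i_0})\phi(x_{n_k,i_0})\ge w_{i_0}\,\eta\,\phi(1-c)=:\kappa>0,
\]
where \(\kappa>0\) because \(i_0\) is active (\(w_{i_0}>0\)). So \(A_n\ge\kappa\) for infinitely many \(n\), and \(A_n\) cannot converge to \(0\).

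No step is a genuine obstacle. The only point needing care is the mismatch with the hypothesis of Theorem 10, addressed in the first paragraph: the corollary should not be presented as a literal contrapositive of Theorem 10, but as a repetition of its subsequence argument, which is fully self-contained. The degenerate case \(c\ge 1\) is handled as noted above.
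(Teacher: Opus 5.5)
Your proof is correct, but it takes a different route from the paper. The paper's entire proof is the sentence that the corollary ``follows immediately from the contrapositive form of Theorem 10.'' Your objection to that citation is well founded. Theorem 10 assumes $1-R_{n,i}\ge\eta_i$ for \emph{every} active component and \emph{all} sufficiently large $n$. The corollary supplies $1-R_{n,i_0}\ge\eta$ only for the single component $i_0$, and only along the infinitely many indices where $x_{n,i_0}\le 1-c$.

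One could read Theorem 10's hypothesis as a standing assumption, but then the corollary's own $\eta$-condition would be superfluous, so the stand-alone reading is the natural one. Under that reading the contrapositive of Theorem 10's statement does not apply. What actually carries the claim is the estimate from the converse half of Theorem 10's proof, $A_{n_k}\ge w_{i_0}\eta\,\phi(1-c)>0$. That estimate uses the redundancy bound only for $i_0$ and only along $n_k$, and it is exactly the one you derive directly.

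Deducing the corollary from the \emph{statement} of Theorem 10 takes extra work. For instance, pass to the subsequence $n_k$ and apply the theorem to an auxiliary system with weight $1$ on $i_0$; its age is at most $A_{n_k}/w_{i_0}$.

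The paper's phrasing buys brevity and presents the corollary as a diagnostic reading of Theorem 10. Your argument is self-contained and valid under the corollary's actual, weaker hypotheses. It also makes explicit the degenerate case $c\ge 1$, which the paper leaves implicit.
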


\begin{proof}
The conclusion follows immediately from the contrapositive form of Theorem 10.
\end{proof}

This corollary is useful because it gives a simple diagnostic criterion. If some active component remains persistently imperfect and its burden is not asymptotically cancelled by redundancy, then the system cannot enter the zero-burden regime.

\subsection{Synthesis of long-run asymptotic regimes}

The results above allow the framework to move from static age estimation to a genuine theory of repeated artificial persistence. The age sequence is uniformly bounded at every cycle, so explosive pointwise aging is excluded. Under finite total variation, or under the stronger geometric damping condition, the age process converges to a finite limit and positive marginal aging vanishes. If that limit is strictly positive, the system remains in a burdened persistent regime and necessarily accumulates infinite cumulative burden over an infinite horizon. If instead the age sequence converges to zero, then the system enters the stronger zero-burden persistent regime. Under nondegenerate redundancy, this regime is equivalent to asymptotically perfect consistency in all active components.

Accordingly, the central asymptotic hierarchy of the framework is not organized around unrestricted pointwise age growth, because such growth is excluded by construction. Rather, it is organized around distinct long-run burden profiles:
\[
\begin{aligned}
A_n &\text{ bounded}, &&\text{which excludes explosive cycle-aging,} \\
A_n &\to A^\ast \in (0,B_\varepsilon), &&\text{which yields burdened persistence,} \\
A_n &\to 0, &&\text{which yields zero-burden persistence,} \\
\mathcal{C}_N &\to \infty, &&\text{which captures cumulative terminal burden.}
\end{aligned}
\]

The main theoretical insight can therefore be stated clearly: within the proposed framework, indefinite cyclic continuation does not require unbounded structural aging. A system may pass through infinitely many cycles while its cycle-level age remains bounded, while its positive age increments vanish under stabilization conditions, and, in the strongest regime, while its cycle-level structural burden tends to zero. Long-run persistence is therefore not a problem of inevitable explosive deterioration, but a problem of how redundancy-adjusted structural burden behaves asymptotically under repeated operation.
\section{Discussion and Implications}

\subsection{Discussion}

The main contribution of the present framework is that it transforms the redundancy-adjusted Artificial Age Score from a static evaluative quantity into a mathematically analyzable age process for repeated artificial operation. In this respect, the framework is closer to cybernetics than to one-shot benchmarking, because its central object is not a single success event but the evolution of structural burden across cycles (Wiener, 1948/1961). It is also closer to reliability theory, because the relevant question is not isolated success alone but the persistence of organized performance under structural constraints (von Neumann, 1956).

A first implication of this shift is conceptual. Long-run artificial operation should not be described simply in binary terms such as ``working'' versus ``failing.'' A system may remain operational while converging to a positive asymptotic burden, or it may approach the stronger regime in which cycle-level burden vanishes. The theory therefore replaces a binary picture of persistence with a hierarchy of asymptotic burden profiles. This matters because repeated operation may remain formally stable even when a nonzero structural cost persists from cycle to cycle.

A second implication is mathematical. Because cycle-level age is uniformly bounded in the present framework, explosive pointwise aging is excluded by construction. This shifts the interpretation of terminality away from instantaneous divergence and toward cumulative burden, persistent positive load, or failure of attenuation. In other words, the central question is no longer whether the age value can blow up at a single cycle, but whether structural burden stabilizes, oscillates, remains positive, or accumulates across an unbounded horizon. This distinction is consistent with the broader lesson that indefinite processes require explicit structural discipline if they are to remain mathematically meaningful (Hilbert, 1926). The same general point is reinforced by the foundational warning that unrestricted totalization can lead to contradiction rather than rigor (Russell, 1903). It is also consistent with the axiomatic demand that formally meaningful reasoning proceed under explicit constraint (Zermelo, 1908).

A third implication concerns redundancy. In many artificial systems, apparent weaknesses are not independent; rather, they may be partially overlapping manifestations of the same underlying instability. A purely additive burden measure would therefore risk overcounting deterioration. By incorporating the factor \((1-R_{n,i})\), the present framework interprets redundancy as a correction against structural double-counting rather than as a superficial adjustment. This makes the age functional more appropriate for systems in which organization and compensation affect the effective significance of local weakness (von Neumann, 1956). It is also broadly consistent with the information-theoretic lesson that structure matters, rather than raw quantity alone, in the interpretation of uncertainty and degradation (Shannon, 1948).

A fourth implication concerns long-run stabilization. The present theory shows that boundedness alone is not the most informative property of repeated operation. More refined distinctions emerge once one studies convergence, finite variation, geometric damping, and zero-burden persistence. In particular, the framework distinguishes a system that merely remains bounded from one whose positive marginal age increments vanish, and further from one whose cycle-level burden itself converges to zero. This gives the theory a stronger analytical role than that of a descriptive score, because it identifies mathematically distinct forms of long-run persistence.

A fifth implication concerns scope. The present framework does not claim to explain intelligence, semantics, or cognition in full. Nor does it claim that repeated successful operation is equivalent to understanding. Its contribution is narrower and more defensible: it provides a formal account of how structural burden behaves across repeated cycles under explicit assumptions on consistency, redundancy, and weighting. This restricted scope is a strength rather than a weakness, because it keeps the theory aligned with what its formal structure can genuinely justify (Bender \& Koller, 2020). It is also consistent with the more general caution that formal systems should not be mistaken for exhaustive accounts of all that they touch (Gödel, 1931).

Taken together, these points suggest that the strongest interpretation of the framework is neither purely metaphorical nor excessively ambitious. It is not merely a new score, because it generates a mathematically structured theory of asymptotic burden under repeated operation. At the same time, it is not a complete theory of intelligence or cognition. Its contribution lies in a more disciplined middle ground: a bounded, redundancy-aware, cycle-level formalism for analyzing long-run artificial persistence.

\subsection{Implications}

One practical implication is for the evaluation of systems that operate repeatedly rather than once. When a model is updated, queried, adapted, or exposed to continuing interaction, endpoint performance alone may fail to reveal whether the underlying structural burden is stabilizing or compounding. The present framework therefore suggests that longitudinal evaluation should complement one-shot benchmarking whenever repeated deployment is central to the application domain (NIST, 2023). The broader cybernetic emphasis on feedback and continued regulation supports the importance of evaluating systems across ongoing operation rather than only at isolated endpoints (Wiener, 1948/1961).

A second implication is for reliability-oriented system design. Modern AI systems often involve coupled modules, fallback mechanisms, retrieval components, memory layers, and post-processing structures. In such settings, observed weaknesses may be correlated rather than independent. A redundancy-adjusted age functional is therefore more appropriate than a raw additive failure count, because it discounts overlapping burden and better reflects the effective structural load carried by the overall system (von Neumann, 1956).

A third implication is for continual adaptation and memory-aware operation. In systems that repeatedly incorporate new information, structural burden should not be interpreted only as immediate error or failure. It may also reflect the cost of retaining, updating, compressing, or reorganizing prior structure over time. Under this interpretation, the distinction between burdened persistence and zero-burden persistence becomes relevant for understanding whether repeated adaptation is merely sustainable or asymptotically self-stabilizing (Parisi et al., 2019). This analogy may also be read alongside broader memory-theoretic discussions of retention and adaptive forgetting, even though the present framework does not model human memory directly (Schacter, 1999). The distinction between different forms of retention can also be related, at an interpretive level, to Tulving's account of memory organization (Tulving, 1972). Tulving's later work on memory systems is likewise relevant as a conceptual background for thinking about differentiated retention across time (Tulving, 1985).

A fourth implication is for monitoring and governance. The distinction between signed change \(\delta_n\), absolute fluctuation \(D_n\), positive accumulation \(\Delta_n\), and cumulative burden \(\mathcal{C}_N\) provides a more differentiated vocabulary for tracking long-run system behavior. This may be useful in settings where acceptable one-time performance is not sufficient, and where repeated interaction changes the system's risk profile over time. In such settings, the relevant issue is not only whether the system is currently acceptable, but whether its burden is stabilizing, oscillating, or accumulating across continued use (NIST, 2023). This broader concern with trustworthiness, explicability, and responsible oversight is also consistent with contemporary ethical frameworks for AI governance (Floridi et al., 2018).

A fifth implication is for future theoretical development. Because the present framework already establishes boundedness, ordering, perturbation sensitivity, convergence under finite variation, geometric stabilization, and zero-burden characterization under nondegenerate redundancy, it provides a basis for later extensions to stochastic settings, distributional drift, adaptive control, or empirical continual-learning scenarios. Such extensions would not replace the current theory, but would build on its formal core.

Overall, the practical importance of the framework lies in its ability to distinguish forms of persistence that would otherwise remain collapsed into a single notion of ``continued operation.'' A system may continue with stable positive burden, continue with oscillatory but bounded burden, or continue while attenuating its structural burden toward zero. The present theory provides the formal language needed to separate these possibilities.
\section{Conclusion}

The present study developed a long-run persistence framework for AI systems under the redundancy-adjusted Artificial Age Score (Kayadibi, 2026). Its central contribution was to extend AAS from a static evaluative quantity into a cycle-level formalism for repeated artificial operation. Within this framework, the main theoretical question was whether indefinite cyclic continuation must imply unbounded structural aging.

The results show that this is not the case. The cycle-level age functional is well defined, nonnegative, and uniformly bounded, so explosive pointwise aging is excluded by construction. This boundedness makes it possible to analyze repeated operation over an unbounded horizon without requiring the age assigned to individual cycles to diverge. In this sense, the framework formally separates indefinite continuation from inevitable pointwise age explosion.

On this basis, the study established a hierarchy of long-run asymptotic regimes. A system may remain in a burdened persistent regime, in which cycle-level age converges to a positive limit, or it may enter the stronger zero-burden persistent regime, in which cycle-level burden converges to zero. The theory also showed that finite total variation implies convergence of the age sequence, that geometrically damped componentwise perturbations yield geometric stabilization of age, and that finite cumulative burden necessarily requires vanishing cycle-level age. Under nondegenerate redundancy, the strongest regime admits a precise componentwise interpretation: zero-burden persistence is equivalent to asymptotically perfect consistency in all active components.

These results clarify that long-run persistence is not a single condition. A system may continue operating while carrying a stable positive burden, while oscillating within a bounded range, or while asymptotically attenuating its burden toward zero. The theoretical significance of the framework lies in making these possibilities formally distinguishable. Persistence is therefore not adequately described by the binary question of whether a system still functions, but by the asymptotic profile of the structural burden it carries across repeated cycles.

At the same time, the scope of the framework is deliberately limited. The present model does not claim to provide a full theory of intelligence, cognition, or semantic understanding. Its contribution is narrower and more rigorous: it supplies a bounded, redundancy-aware, cycle-level account of structural burden under repeated operation. This restricted scope is a strength because it aligns the claims of the theory with what the formal results actually establish.

The central conclusion may therefore be stated as follows: within the proposed framework, indefinite cyclic continuation is mathematically compatible with bounded structural age and, under stronger stabilization conditions, with vanishing marginal aging and even asymptotically vanishing cycle-level burden. Long-run artificial persistence should therefore be understood not as a process of inevitable explosive deterioration, but as a formally analyzable problem of asymptotic structural burden under repeated operation.
\section*{References}

\begingroup
\setlength{\parindent}{0pt}
\setlength{\parskip}{\baselineskip}

Aristotle. (1998). \textit{Metaphysics} (H. Lawson-Tancred, Trans.). Penguin Books. (Original work written ca. 350 B.C.E.)

Ashby, W. R. (1960). \textit{Design for a brain: The origin of adaptive behaviour} (2nd ed.). Chapman \& Hall.

Bender, E. M., \& Koller, A. (2020). Climbing towards NLU: On meaning, form, and understanding in the age of data. In \textit{Proceedings of the 58th Annual Meeting of the Association for Computational Linguistics} (pp. 5185--5198). Association for Computational Linguistics. \url{https://doi.org/10.18653/v1/2020.acl-main.463}

Bird, A. (2004). Thomas Kuhn. In E. N. Zalta (Ed.), \textit{The Stanford Encyclopedia of Philosophy}. Metaphysics Research Lab, Stanford University. \url{https://plato.stanford.edu/archives/win2004/entries/thomas-kuhn/}

Bolzano, B. (1851). \textit{Paradoxien des Unendlichen}. C. H. Reclam.

Cantor, G. (1874). Ueber eine Eigenschaft des Inbegriffes aller reellen algebraischen Zahlen. \textit{Journal f\"ur die Reine und Angewandte Mathematik, 77}, 258--262. \url{https://doi.org/10.1515/crll.1874.77.258}

Cohen, S. M. (2000). Aristotle's metaphysics. In E. N. Zalta (Ed.), \textit{The Stanford Encyclopedia of Philosophy}. Metaphysics Research Lab, Stanford University. \url{https://plato.stanford.edu/entries/aristotle-metaphysics/}

Cook, R. T. (2023). Frege's logic. In E. N. Zalta \& U. Nodelman (Eds.), \textit{The Stanford Encyclopedia of Philosophy} (Fall 2023 ed.). Metaphysics Research Lab, Stanford University. \url{https://plato.stanford.edu/archives/fall2023/entries/frege-logic/}

Dedekind, R. (1888). \textit{Was sind und was sollen die Zahlen?} Friedrich Vieweg \& Sohn.

Floridi, L., Cowls, J., Beltrametti, M., Chatila, R., Chazerand, P., Dignum, V., Luetge, C., Madelin, R., Pagallo, U., Rossi, F., Schafer, B., Valcke, P., \& Vayena, E. (2018). AI4People---An ethical framework for a good AI society: Opportunities, risks, principles, and recommendations. \textit{Minds and Machines, 28}(4), 689--707. \url{https://doi.org/10.1007/s11023-018-9482-5}

Frege, G. (1967). \textit{Begriffsschrift, a formula language, modeled upon that of arithmetic, for pure thought} (S. Bauer-Mengelberg, Trans.). In J. van Heijenoort (Ed.), \textit{From Frege to G\"odel: A source book in mathematical logic, 1879--1931} (pp. 1--82). Harvard University Press. (Original work published 1879)

Frege, G. (1980). \textit{The foundations of arithmetic: A logico-mathematical enquiry into the concept of number} (J. L. Austin, Trans., 2nd rev. ed.). Northwestern University Press. (Original work published 1884)

G\"odel, K. (1931). \"Uber formal unentscheidbare S\"atze der Principia Mathematica und verwandter Systeme I. \textit{Monatshefte f\"ur Mathematik und Physik, 38}(1), 173--198. \url{https://doi.org/10.1007/BF01700692}

Hilbert, D. (1926). \"Uber das Unendliche. \textit{Mathematische Annalen, 95}(1), 161--190. \url{https://doi.org/10.1007/BF01206605}

Kant, I. (1992). \textit{Critique of pure reason} (N. Kemp Smith, Trans.). Macmillan. (Original work published 1781/1787)

Kayadibi, S. Y. (2026). Redundancy-as-masking: Formalizing the Artificial Age Score (AAS) to model memory aging in generative AI. \textit{Frontiers in Artificial Intelligence, 9}, Article 1732691. \url{https://doi.org/10.3389/frai.2026.1732691}

Kuhn, T. S. (2012). \textit{The structure of scientific revolutions} (4th ed.). University of Chicago Press. (Original work published 1962)

McCarthy, J., Minsky, M. L., Rochester, N., \& Shannon, C. E. (1955, August 31). \textit{A proposal for the Dartmouth summer research project on artificial intelligence} [Research proposal]. Dartmouth College.

McCulloch, W. S., \& Pitts, W. (1943). A logical calculus of the ideas immanent in nervous activity. \textit{The Bulletin of Mathematical Biophysics, 5}(4), 115--133. \url{https://doi.org/10.1007/BF02478259}

Newell, A., \& Simon, H. A. (1976). Computer science as empirical inquiry: Symbols and search. \textit{Communications of the ACM, 19}(3), 113--126. \url{https://doi.org/10.1145/360018.360022}

Parisi, G. I., Kemker, R., Part, J. L., Kanan, C., \& Wermter, S. (2019). Continual lifelong learning with neural networks: A review. \textit{Neural Networks, 113}, 54--71. \url{https://doi.org/10.1016/j.neunet.2019.01.012}

Rosenblatt, F. (1958). The perceptron: A probabilistic model for information storage and organization in the brain. \textit{Psychological Review, 65}(6), 386--408. \url{https://doi.org/10.1037/h0042519}

Russell, B. (1903). \textit{The principles of mathematics}. Cambridge University Press.

Schacter, D. L. (1999). The seven sins of memory: Insights from psychology and cognitive neuroscience. \textit{American Psychologist, 54}(3), 182--203. \url{https://doi.org/10.1037/0003-066X.54.3.182}

Shabel, L. (2013). Kant's philosophy of mathematics. In E. N. Zalta (Ed.), \textit{The Stanford Encyclopedia of Philosophy} (Fall 2013 ed.). Metaphysics Research Lab, Stanford University. \url{https://plato.stanford.edu/archives/fall2013/entries/kant-mathematics/}

Shannon, C. E. (1948). A mathematical theory of communication. \textit{Bell System Technical Journal, 27}(3), 379--423. \url{https://doi.org/10.1002/j.1538-7305.1948.tb01338.x}

Tabassi, E. (2023). \textit{Artificial intelligence risk management framework (AI RMF 1.0)} (NIST AI 100-1). National Institute of Standards and Technology. \url{https://doi.org/10.6028/NIST.AI.100-1}

Tulving, E. (1972). Episodic and semantic memory. In E. Tulving \& W. Donaldson (Eds.), \textit{Organization of memory} (pp. 381--403). Academic Press.

Tulving, E. (1985). Memory and consciousness. \textit{Canadian Psychology / Psychologie canadienne, 26}(1), 1--12. \url{https://doi.org/10.1037/h0080017}

Turing, A. M. (1950). Computing machinery and intelligence. \textit{Mind, 59}(236), 433--460. \url{https://doi.org/10.1093/mind/LIX.236.433}

von Neumann, J. (1956). Probabilistic logics and the synthesis of reliable organisms from unreliable components. In C. E. Shannon \& J. McCarthy (Eds.), \textit{Automata studies} (pp. 43--98). Princeton University Press.

Wiener, N. (1961). \textit{Cybernetics: Or control and communication in the animal and the machine} (2nd ed.). MIT Press. (Original work published 1948)

Zermelo, E. (1908). Untersuchungen \"uber die Grundlagen der Mengenlehre. I. \textit{Mathematische Annalen, 65}(2), 261--281. \url{https://doi.org/10.1007/BF01449999}

\endgroup
\end{document}